\pdfoutput=1
\documentclass{article}

\PassOptionsToPackage{numbers,compress}{natbib}

\usepackage[preprint]{neurips_2026}

\usepackage[utf8]{inputenc}
\usepackage[T1]{fontenc}
\usepackage{hyperref}
\usepackage{url}
\usepackage{booktabs}
\usepackage{amsfonts}
\usepackage{amsmath}
\usepackage{amssymb}
\usepackage{amsthm}
\usepackage{mathtools}
\usepackage{nicefrac}
\usepackage{microtype}
\usepackage[table,dvipsnames]{xcolor}
\usepackage{graphicx}
\usepackage{wrapfig}
\usepackage[ruled,vlined,linesnumbered,rightnl]{algorithm2e}
\usepackage{enumitem}
\usepackage{xspace}

\usepackage{Definitions}

\renewcommand{\le}{\leqslant}

\renewcommand{\ge}{\geqslant}

\DeclareMathOperator{\KL}{KL}
\newcommand{\scorer}{Q_\psi}
\newcommand{\dv}{\xb}
\newcommand{\dy}{\yb}
\newcommand{\masktoken}{\texttt{[MASK]}}
\newcommand{\blanktoken}{\texttt{[BLANK]}}
\newcommand{\method}{\texttt{FReDA}\xspace}

\definecolor{GainGreen}{RGB}{0,128,64}

\title{
Forward-Free Diffusion Language Models \\
with BPTT-Free Looped Refinement
}

\author{Haotian Sun \quad Rushi Qiang \quad Yuqian Zheng \quad Bo Dai\thanks{Corresponding author.} \\ Georgia Institute of Technology \\ \texttt{haotian.sun@gatech.edu, bodai@cc.gatech.edu} }

\begin{document}

\maketitle

\setlength{\abovedisplayskip}{1pt}
\setlength{\abovedisplayshortskip}{1pt}
\setlength{\belowdisplayskip}{1pt}
\setlength{\belowdisplayshortskip}{1pt}
\setlength{\jot}{1pt}
\setlength{\floatsep}{1ex}

\begin{abstract}
Diffusion language models generate text through iterative denoising, offering a powerful alternative to autoregressive generation.
However, discrete language spaces lack a natural neighborhood structure for defining effective perturbations, motivating artificial corruption schemes in the forward process.
Such prescribed forward processes often produce states that are mathematically convenient but misaligned with the drafts and errors encountered during generation, resulting in degraded sample quality.
To address this limitation, we propose \method, a forward-free diffusion language model that eliminates the need for a hand-designed forward process.
We formulate diffusion language modeling as recursive distribution refinement, in which model-generated drafts serve as implicit intermediate states and the learned refinement model progressively moves the draft distribution toward the target distribution.
Training detaches the preceding refinement passes and backpropagates only through the final pass, avoiding backpropagation through time (BPTT) across refinement iterations while retaining standard backpropagation within the final Transformer pass.
Concretely, \method refines drafts by proposing candidate sequences and either directly performing self-refinement or selecting among parallel candidates via Best-of-$N$ refinement.
With this design, \method is neighborhood-agnostic, sampler-adaptive, and compatible with flexible refinement parameterizations.
Extensive evaluations in the sub-8B regime show that \method-4B outperforms larger diffusion base models on reasoning and coding benchmarks, achieving absolute gains of up to 15\%, while reaching a 1.5--1.8$\times$ average speedup over diffusion baselines and scaling effectively with additional refinement computation.
\end{abstract}

\section{Introduction}
\vspace{-3pt}

\label{sec:intro}
Diffusion models have emerged as a powerful generative paradigm across diverse domains, including image synthesis~\citep{ddpm, LDM}, biological sequence generation~\citep{diff-bio, diff-protein}, and language modeling~\citep{SEDD, D3PM, MDLM}.
Unlike autoregressive~(AR) models, which generate sequences via a left-to-right factorization, diffusion models formulate generation as an iterative refinement process that transforms samples from a simple initial distribution toward the target distribution through learned backward transitions, and therefore have the potential to improve long-horizon text modeling and accelerate generation.

Conventional diffusion models typically prescribe a forward noising process that gradually perturbs data samples under a predefined noise schedule~\cite{ddpm, MD4} from the initial distribution.
This forward process defines the intermediate distribution path that the backward model learns to invert, thereby strongly shaping the difficulty of both training and inference~\citep{D3PM,iDDPM,design-path}.
In continuous domains, this paradigm is naturally supported by Gaussian perturbations, which exploit local neighborhoods under Euclidean geometry and preserve meaningful proximity between nearby states~\citep{ddpm, score-based}.
In discrete language modeling, however, there is no analogous canonical geometry for noise design.
Token indices carry no intrinsic metric, and semantic proximity is highly context-dependent~\citep{D3PM, ctx_dep}.
Consequently, the choice of initial distribution and forward corruption process becomes central to modeling language with diffusion: a poorly matched forward process may induce intermediate states that are misaligned with the drafts and errors encountered during generation, making the learned backward transitions less effective~\citep{SCUD, mdpo}.

Many structured perturbation approaches have been proposed for diffusion language modeling, including structured discrete transition kernels~\citep{multinomial_diffusion, ctdd, D3PM}, absorbing-state masking~\citep{MDLM,MD4}, discrete score-based objectives~\citep{SEDD, MDM_secret}, and scaling recipes adapted from autoregressive models~\citep{DiffuLLaMA,LLaDA, sdar, dream7b}.
However, discrete diffusion methods with predefined corruption paths still exhibit a significant performance gap relative to AR models in language modeling~\citep{D3PM,gulrajani2023likelihood,he2023diffusionbert,SEDD}, showing that without a natural neighborhood structure in discrete language spaces, the best perturbation design remains unclear.

Motivated by the difficulty of designing forward perturbations in discrete space, we ask whether diffusion language modeling can bypass a hand-designed forward corruption process altogether.
We propose \textbf{F}orward-F\textbf{Re}e \textbf{D}iffusion L\textbf{A}nguage Model~(\method) to answer this question.
We emphasize that diffusion language models recursively refine a distribution from an initial draft distribution and progressively close the gap to the target data distribution through learned reverse transitions.
Based on this view, \method uses drafts generated by the current model as implicit intermediate states, allowing the distributional path to be induced by the sampler itself rather than prescribed by an external noising schedule.
This design makes \method both {\bf neighborhood agnostic}, \ie, avoiding artificial token-level perturbations in discrete space, and {\bf sampler adaptive}, \ie, training each transition on the drafts and errors produced by the current sampler.
\method naturally supports flexible refinement parameterizations, and we instantiate it with two designs: {\bf 1)} self-refinement, which directly revises the model's own draft; and {\bf 2)} stochastic Best-of-$N$ refinement, which proposes multiple candidate sequences in parallel and uses an additional score head to select the most promising one.

The recurrent use of the same refinement backbone also places \method directly in the Looped Transformer family~\citep{dehghani2019universal,saunshi2025latentthoughts,geiping2025recurrentdepth}.
Specifically, \method defines a new draft-space form of Looped Transformer: each pass revises an explicit model-generated token draft and feeds the revised draft into the next pass, rather than carrying an internal latent state across loops.
Because every loop produces a candidate output, it can be supervised directly against the clean target.
During training, the preceding refinement passes are detached and gradients are applied only through the final pass, so \method avoids BPTT across refinement iterations while retaining ordinary backpropagation within the final Transformer pass.
This draft-space loop also preserves block-parallel token updates and naturally supports branching into parallel candidates for Best-of-$N$ refinement.
We discuss this connection further in Sec.~\ref{sec:looped_connection}.


We train \method at the 4B scale using a 10B-token continued-pretraining corpus, which is substantially smaller than the training budgets used by several competing diffusion baselines.
We evaluate it extensively across general knowledge, mathematical reasoning, and code generation benchmarks in the sub-8B regime.
Although substantially smaller than several competing diffusion models, \method-4B {\bf surpasses 7--8B} diffusion base models on four reasoning and coding benchmarks, achieving absolute improvements of 5--15\%, while remaining competitive on general knowledge tasks.
\method also establishes a stronger quality--speed Pareto frontier, requiring fewer forward passes to reach comparable accuracy and generating more tokens per forward pass than existing diffusion baselines, with an average speedup of 1.5--1.8$\times$.
Additional analyses further show that \method scales effectively with refinement computation, yielding consistent gains over refinement iterations.

Our main contributions are as follows:
(1) We propose \method, a forward-free diffusion language model that uses model-generated drafts as implicit intermediate states and learns to refine them through flexible parameterizations. The recurrent use of a shared refinement backbone makes \method a draft-space Looped Transformer whose training avoids BPTT across refinement iterations.
(2) We empirically evaluate \method across general knowledge, mathematical reasoning, and code generation benchmarks, demonstrating that \method-4B substantially outperforms 7--8B diffusion baselines and establishes a strong quality--speed Pareto frontier.

\section{Preliminaries}

\label{sec:prelim}

\paragraph{Discrete diffusion models}
Let $\dv=(x_1,\dots,x_L)\in\Vcal^L$ denote a discrete sequence over a finite vocabulary $\Vcal$, and let $p_0(\dv)$ denote the target data distribution.
A diffusion model defines generation through a learned backward process,
\begin{equation}
    p_\theta\rbr{\dv_{0:T}} = \textstyle p_T\rbr{\dv_T}\prod_{t=1}^{T}p_{\theta,t}\rbr{\dv_{t-1}\mid \dv_t},
    \label{eq:backward_joint}
\end{equation}
where $\dv_T$ is sampled from a simple initial distribution, $\dv_0$ denotes target samples, and $\dv_{1:T-1}$ are auxiliary intermediate variables. 
Without additional structure, these intermediate variables are latent, making it generally intractable to directly maximize the marginal likelihood for $p_\theta\rbr{\xb_0}$. Conventional diffusion models bypass this complication by prescribing a forward corruption process,
\begin{equation}
    q\rbr{\dv_{0:T}} = \textstyle p\rbr{\xb_0}\prod_{t=1}^{T}q_t\rbr{\dv_t\mid \dv_{t-1}},
    \label{eq:forward_joint}
\end{equation}
which defines an auxiliary joint distribution over clean data and corrupted states. 
Training can then be written as joint distribution matching, 
\begin{equation}
    \min_{\theta}\,\,\KL\!\rbr{
    q\rbr{\dv_{0:T}}\,\|\, p_\theta\rbr{\dv_{0:T}}
    }.
    \label{eq:joint_kl}
\end{equation}
For diffusion language modeling, a common choice is masked diffusion models~(MDM), where the forward process $q_t$ is absorbing-state masking corruption~\citep{MD4, MDLM},
\begin{equation}
    q_t\rbr{z_i\mid x_i}=\alpha_t \mathbf{1}\{z_i=x_i\}+ (1-\alpha_t)\mathbf{1}\{z_i=\masktoken\}, 
    \label{eq:mdm_forward}
\end{equation}
where each token is either kept unchanged or replaced by a mask token $\masktoken$ controlled by some predefined masking schedule $\alpha_t$.
The resulting masked diffusion objective reduces to a weighted denoising loss over masking-corrupted positions, 
\begin{equation}
    \Lcal_{\mathrm{MDM}}\rbr{\theta} =\textstyle
    \EE_{t,\dv_0,\dv_t\sim q_t(\cdot\mid \dv_0)}
    \sbr{\lambda_t\sum_{i:\dv_{t,i}=\masktoken}-\log p_\theta\rbr{\dv_{0,i}\mid \dv_t,t}
    },
    \label{eq:mdm_objective}
\end{equation}
where $\lambda_t$ is determined by the masking schedule. 

\paragraph{Complications in forward process design}
The objective in Eq.~\eqref{eq:mdm_objective} highlights the central role of the forward corruption process, which specifies both the training inputs $\dv_t$ and the distributional path $q\rbr{\dv_{0:T}}$ that the learned backward process learns to invert. 
Thus, the difficulty of learning and inference is strongly determined by how the forward process decomposes the gap between the initial distribution and the target distribution. 
While Gaussian perturbations provide a natural local geometry in continuous diffusion~\citep{ddpm,score-based}, discrete language modeling has no intrinsic analogue. 
For MDM with absorbing-state masking in Eq.~\eqref{eq:mdm_forward}, the induced neighborhood of token $a$ degenerates to $\Ncal_t(a)=\cbr{a,\masktoken}$, where each token can only remain unchanged or collapse to the shared absorbing state. Therefore, no direct token-to-token proximity is preserved~\citep{SCUD, mdpo}. 
As a result, the model is trained on mathematically convenient intermediate states that may be misaligned with the drafts and errors encountered during generation.

\section{\method: Forward-Free Diffusion Language Model}

\label{sec:method}
To address these challenges in discrete diffusion, we formulate a forward-free diffusion model in which refinement is learned directly from sampler-induced intermediate drafts. 
We outline the proposed \method below and defer detailed derivations and proofs to Appendix~\ref{app:proofs}.

\subsection{Diffusion via Recursive Marginal Refinement}

\paragraph{Draft distribution initialization}
We begin by defining an initial draft distribution $p_{\theta,T}$. 
Since the goal is to approximate the target data distribution $p_0\rbr{\dv}$, a natural design principle is to make $p_{\theta,T}\rbr{\dv}$ as close as possible to $p_0\rbr{\dv}$, thereby reducing the number of refinement iterations required at inference time. 
This leads to the initial matching objective
\begin{equation}
    \min_{\theta}\KL\!\rbr{p_0(\dv)\,\|\,p_{\theta, T}(\dv)}.
    \label{eq:init_kl}
\end{equation}
If $p_{\theta,T}=p_0$, generation can be performed in a single step and no refinement is needed. 
In practice, finite model capacity, factorized decoding, and imperfect optimization introduce an approximation gap between $p_{\theta,T}$ and $p_0$.

\paragraph{Recursive marginal refinement} 
To close this gap, we introduce refinement transitions 
\begin{equation}
    K_{ \theta, t}\rbr{\dv\mid \dv'}, \qquad\dv'\sim p_{\theta, t},\qquad t=T, T-1,\ldots,1,
    \label{eq:refiner_transition}
\end{equation}
where each transition maps samples from the current draft distribution toward the target distribution. 
Applying $K_{\theta,t}$ to the current marginal $p_{\theta,t}$ induces a refined marginal
\begin{equation}
     p_{\theta, t-1}(\dv)={\textstyle \int} K_{\theta,t}\rbr{\dv\mid \dv'}p_{\theta, t}(\dv')\,d\dv'.
    \label{eq:refined_marginal}
\end{equation}
Recursive application of these transitions yields a sequence of model-induced marginals
which progressively closes the gap to the target distribution $p_0$. 
Unlike conventional diffusion, this path is not prescribed by a hand-designed forward corruption process, but instead emerges from the learned refinement dynamics themselves.

\paragraph{Refinement learning objective}
Given the current draft marginal $p_{\theta,t}$, the ideal refinement transition should make the next marginal $p_{\theta,t-1}$ closer to $p_0$. 
Plugging Eq.~\eqref{eq:refined_marginal} into Eq.~\eqref{eq:init_kl} gives
\begin{equation}
    \min_{\theta}
    \KL\!\left(
        p_0(\dv)
        \,\middle\|\,
        {\textstyle \int}
        K_{\theta,t}(\dv\mid \dv')
        p_{\theta,t}(\dv')
        \,d\dv'
    \right).
    \label{eq:ideal_refinement}
\end{equation}
This objective requires marginalizing over drafts $\dv'$, which makes direct optimization difficult. Variational learning through evidence lower bound~(ELBO) is a potential choice~\citep{alias2017variational}, 
\begin{proposition}[Joint surrogate for marginal refinement]
\label{prop:joint_surrogate}
For any refinement kernel $K_{\theta,t}(\dv\mid\dv')$ and any coupling $\gamma_t(\dv,\dv')$ with marginals $p_0(\dv)$ and $p_{\theta,t}(\dv')$, we have
\begin{equation}
\begin{aligned}
    \KL\!\left(
        p_0(\dv)
        \,\middle\|\,
        {\textstyle \int}
        K_{\theta,t}(\dv\mid \dv')
        p_{\theta,t}(\dv')
        \,d\dv'
    \right)
    &\le
    \KL\!\left(
        \gamma_t(\dv,\dv')
        \,\middle\|\,
        K_{\theta,t}(\dv\mid\dv')p_{\theta,t}(\dv')
    \right).
    \label{eq:joint_surrogate_bound}
\end{aligned}
\end{equation}
The equivalency holds if $\gamma_t\rbr{\dv, \dv'} = p_0(\dv)q(\dv'|\dv)$, where $q(\dv'|\dv)=\frac{K_{\theta,t}(\dv\mid\dv')p_{\theta,t}(\dv')}{\int K_{\theta,t}(\dv\mid\dv')p_{\theta,t}(\dv') d\dv' }$.
\end{proposition}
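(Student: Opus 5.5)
The inequality is an instance of the data-processing inequality for KL divergence under marginalization, and I would prove it through the chain rule. Write $m_t(\dv) := \int K_{\theta,t}(\dv\mid\dv')p_{\theta,t}(\dv')\,d\dv'$ for the refined marginal of Eq.~\eqref{eq:refined_marginal}. The joint $P(\dv,\dv') := K_{\theta,t}(\dv\mid\dv')p_{\theta,t}(\dv')$ then has $\dv$-marginal $m_t$ and conditional $P(\dv'\mid\dv) = q(\dv'\mid\dv)$, with $q$ exactly as in the statement. Likewise, since the coupling $\gamma_t$ has $\dv$-marginal $p_0$, I factor it as $\gamma_t(\dv,\dv') = p_0(\dv)\gamma_t(\dv'\mid\dv)$.

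\textbf{Main step: chain rule.} Expanding $\log\frac{\gamma_t}{P} = \log\frac{p_0(\dv)}{m_t(\dv)} + \log\frac{\gamma_t(\dv'\mid\dv)}{q(\dv'\mid\dv)}$ and integrating against $\gamma_t$ gives
\begin{equation*}
\KL\!\left(\gamma_t \,\middle\|\, P\right) = \KL\!\left(p_0 \,\middle\|\, m_t\right) + \EE_{\dv\sim p_0}\!\left[\KL\!\left(\gamma_t(\cdot\mid\dv)\,\middle\|\, q(\cdot\mid\dv)\right)\right].
\end{equation*}
The second term is nonnegative by Gibbs' inequality, which yields Eq.~\eqref{eq:joint_surrogate_bound}. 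Equality holds iff $\gamma_t(\cdot\mid\dv) = q(\cdot\mid\dv)$ for $p_0$-a.e.\ $\dv$, i.e.\ iff $\gamma_t = p_0(\dv)q(\dv'\mid\dv)$. This is the stated equality condition, and it is the standard ELBO tightness when the variational posterior equals the true posterior.

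\textbf{Technicalities.} The main obstacle is measure-theoretic rather than conceptual.
\begin{itemize}
\item If $\KL(\gamma_t\|P) = \infty$, the bound is trivial. Otherwise $\gamma_t \ll P$, so $p_0 \ll m_t$, and the conditional densities exist on the support of $p_0$. Both KL terms are then well defined, and splitting the log-ratio integral is justified because each piece has a well-defined integral with nonnegative, or at least not $-\infty$, contribution.
\item In the discrete setting $\Vcal^L$ relevant here, all integrals are finite sums. I would carry out the argument with sums and handle zero-probability terms via the conventions $0\log 0 = 0$ and $a\log(a/0) = \infty$, noting that the general case follows identically with densities.
\end{itemize}

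\textbf{Where the coupling assumption enters.} The $\dv'$-marginal constraint $\gamma_t$ has marginal $p_{\theta,t}$ is not needed for the inequality. Only the $\dv$-marginal $p_0$ matters. The equality-case coupling $p_0(\dv)q(\dv'\mid\dv)$ generally has $\dv'$-marginal different from $p_{\theta,t}$. I would remark on this: it means equality is attained by this choice of $\gamma_t$ even though that $\gamma_t$ need not lie in the coupling class.
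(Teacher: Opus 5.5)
Your proof is correct, but it reaches the bound by a different mechanism than the paper. The paper applies Jensen's inequality to $\log$ pointwise in $\dv$, for an arbitrary conditional (which it also denotes $q(\dv'\mid\dv)$):
\[
\int q(\dv'\mid\dv)\log\frac{K_{\theta,t}(\dv\mid\dv')p_{\theta,t}(\dv')}{q(\dv'\mid\dv)}\,d\dv' \le \log\int K_{\theta,t}(\dv\mid\dv')p_{\theta,t}(\dv')\,d\dv'.
\]
It then multiplies by $p_0(\dv)$ and integrates. It never forms the true posterior, and it stops once the inequality is obtained, so it does not actually prove the equality clause.

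Your chain-rule identity instead computes the slack exactly as $\EE_{\dv\sim p_0}\bigl[\KL(\gamma_t(\cdot\mid\dv)\,\|\,q(\cdot\mid\dv))\bigr]$. This gives the stated equality case immediately, and also its converse. The converse needs $\KL(p_0\,\|\,m_t)<\infty$: when that is infinite, both sides are $+\infty$ for every admissible $\gamma_t$, and $q$ is not even defined where $m_t(\dv)=0<p_0(\dv)$. The price of your route is that you must disintegrate $K_{\theta,t}(\dv\mid\dv')p_{\theta,t}(\dv')$, i.e.\ divide by $m_t$. That is harmless on a finite sequence space, but it is exactly what the Jensen route avoids when only the inequality is wanted.

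Your remark on the coupling class applies to the paper's argument too: only the $\dv$-marginal $p_0$ is ever used. Moreover, $p_0(\dv)q(\dv'\mid\dv)$ has $\dv'$-marginal $p_{\theta,t}(\dv')\int p_0(\dv)K_{\theta,t}(\dv\mid\dv')/m_t(\dv)\,d\dv$. This equals $p_{\theta,t}$ in special cases (e.g.\ $m_t=p_0$) but not in general, so the equality case is attained outside the class of couplings the proposition quantifies over.
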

However, the ELBO quickly becomes unaffordable in terms of computation and memory cost w.r.t. the model size increasing, due to the requirement of the auxiliary model for posterior $q(\dv'|\dv)$ approximation.
We hereby introduce a tractable and efficient coupling, which is inherited from the diffusion learning spirit, therefore, without explicit posterior approximation; meanwhile, without predefined forward perturbation, therefore, bypassing the presumed neighborhood in discrete space. 
Intuitively, we revisit the generation procedure: given the target $\dv$, the model is gradually refining the output $\dv'$ aiming towards $\dv$, which implies ${\gamma\rbr{\dv, \dv'}} =  {p_0\rbr{\dv}}q\rbr{\dv'|\dv}$ should reflect the reverse of such a generation procedure. 
Such a dependency can be easily induced through the current learned $p_{\theta, t}\rbr{\cdot}$. Concretely, given the context $\dy$ of $\dv$, which could be the prefix or random mask version of $\dv$ in unconditional generation or the corresponding prompts in conditional generation, we can easily construct $q(\dv'|\dv) = p_{\theta, t}\rbr{\dv'|\dy}$. For the detailed efficient implementation of the sampling operation, please refer to~\appref{app:implementation}. 
With our construction, the refinement kernel is pushing further towards the ground truth $\dv$ from current intermediate output $\dv'$, therefore, we obtain our objective,
\begin{equation}
    \min_{\theta} \mathcal{L}_{t}(\theta)
    :=
    -
    \mathbb{E}_{\dv\sim p_0, \dv'\sim p_{\theta,t}}\sbr{
    \log K_{\theta,t}(\dv\mid\dv')}.
    \label{eq:conditional_refinement_loss}
\end{equation}
This joint probability facilitates learning the marginal objective in Eq.~\eqref{eq:ideal_refinement} through the tractable surrogate established in Proposition~\ref{prop:joint_surrogate}.

After learning $K_{\theta,t}$, next draft marginal $p_{\theta,t-1}$ becomes available through Eq.~\eqref{eq:refined_marginal} and we repeat the procedure to further reduce the gap between the current draft distribution and the target distribution.
In \method, the joint $\gamma_t$ is constructed from model-generated drafts $\dv'\sim \ptil_{\theta,t}$ and clean targets $\dv\sim p_0$ under the same conditioning context, rather than from a predefined forward perturbation. 
Thus, the intermediate states used for learning are induced by the sampler itself.
This recursive marginal refinement formulation has two practical benefits.

\paragraph{Neighborhood-agnostic refinement}
By removing the explicit forward corruption process, \method does not require a predefined token-level neighborhood perturbation. 
This is especially important in discrete language spaces, where there is no canonical metric for deciding which token states should be treated as nearby or easily reversible.

\paragraph{Capacity-adaptive refinement}
In our construction, each refinement transition is trained to move the current draft distribution closer to the target distribution $p_0$. 
Since $K_{\theta,t}$ is constrained by finite model capacity and finite candidate search, each refinement step closes only the portion of the remaining distributional gap that the current parameterization can represent. 
As a result, the useful number of refinement steps adapts naturally to the expressiveness of the learned refinement model.

Furthermore, we also provide formal justifications for the good properties of recursive marginal refinement in Appendix~\ref{app:proofs}, showing that the joint surrogate upper-bounds the marginal refinement objective, preserves the target marginal at the optimum, and yields monotonic improvement under ideal refinement.

\subsection{Parameterization Design for Refinement Transitions}
\label{sec:parameterization_design}
The \method framework allows flexible parameterizations of the refinement transition $K_{\theta}$. In the following, we discuss two design choices for $K_{\theta}$.

\paragraph{Self-refinement.}
Given a current draft $\dv'$, the model predicts a clean sequence through a token-factorized conditional distribution, 
\begin{equation}
  K_{\theta}^\mathrm{SR}\rbr{\dv\mid \dv'}
    =\textstyle
    \prod_{i=1}^{L}
   K_{\theta}^\mathrm{SR}\rbr{x_i\mid \dv'},
    \label{eq:self_refine_transition}
\end{equation}
where the model conditions on the full current draft and outputs per-token refinement probabilities. Plugging \eqref{eq:self_refine_transition} into Eq.~\eqref{eq:conditional_refinement_loss} gives the self-refinement objective
\begin{equation}
   \min_{\theta}  \Lcal_{\mathrm{SR}}(\theta) :=-\EE_{t,\,\dv\sim p_0,\,\dv'_t\sim p_{\theta,t}}
    \sbr{\textstyle\sum_{i=1}^{L}\log K_{\theta}^\mathrm{SR}(x_i\mid \dv'_t)}.
    \label{eq:self_refine_loss}
\end{equation}
At inference time, the model repeatedly feeds its own predicted draft back into the same refinement operator: $\dv_{k+1} = \operatorname{Decode}\rbr{K_{\theta}^\mathrm{SR}(\cdot\mid y_k)}$, where $\operatorname{Decode}(\cdot)$ can be greedy decoding, sampling, or threshold-based commitment.  This yields a stochastic fixed-point refinement process in which high-quality samples should become increasingly stable as refinement proceeds.

\paragraph{Best-of-$N$ refinement.} 
The self-refinement parameterization in Eq.~\eqref{eq:self_refine_transition} follows a single refinement trajectory from each draft. 
To expand the search space, we further introduce a stochastic best-of-$N$ refinement parameterization. 
Given a current draft $\dv'$, the self-refinement proposer $K_{\theta,\mathrm{SR}}\rbr{\dv\mid\dv'}$ first generates $N$ candidate refined sequences,
\begin{equation}
    \cbr{\dv_i}_{i=1}^N\sim K_{\theta}^\mathrm{SR}\rbr{\cdot\mid \dv'}.
\end{equation}
We then augment the proposer with a learned scorer $\scorer\rbr{\cdot}$ that evaluates and selects the top candidate via a stochastic best-of-$N$ refinement step, 
\begin{equation}
    J=\arg\max_{j\in\sbr{N}}\cbr{\scorer\rbr{\dv_j}+\xi_j},    \qquad
    \xi_j\sim\mathrm{Gumbel}(0,1), \qquad
    \dv^\star=\dv_J.
    \label{eq:gumbel_bon}
\end{equation}
By the Gumbel-Max trick~\citep{maddison2014sampling}, the selected samples follows
\begin{equation}
 \dv^\star\sim \PP\rbr{\dv^\star=\dv_\ell\mid \cbr{\dv_i}_{i=1}^N}
    =
    \textstyle\frac{
        \exp\rbr{\tau\scorer\rbr{\xb_\ell}}
    }{
        \sum_{j=1}^{N}
        \exp\rbr{\tau\scorer\rbr{\xb_j}}
    },
    \label{eq:finite_candidate_softmax}
\end{equation}
where $\tau$ is the temperature factor. 
The finite-candidate selection rule induces a valid refinement parameterization as depicted in Proposition~\ref{prop:bon_transition}.
\begin{proposition}[Induced best-of-$N$ refinement transition]
\label{prop:bon_transition}
Without loss of generality, we assume $\dv_1$ corresponds to the best candidate. This parameterizes the refinement probability as
\begin{equation}
    \begin{aligned}
        &K_{\theta, \psi}^\mathrm{BoN}\rbr{\dv^\star=\dv_1\mid \xb'}\propto\\
        &
        K_{\theta}^\mathrm{SR}\rbr{\xb_1\mid\xb'} 
        \EE_{\xb_{2:N}\sim K_{\theta}^\mathrm{SR}\rbr{\xb_i\mid\xb'} }
        \sbr{\textstyle\frac{
        \exp\rbr{\tau\scorer\rbr{\xb_1}}
    }{
        \exp\rbr{\tau\scorer\rbr{\xb_1}}+\sum_{j=2}^{N}
        \exp\rbr{\tau\scorer\rbr{\xb_j}
    }}},
     \label{eq:refine_prob_bon}
    \end{aligned}
\end{equation}
\end{proposition}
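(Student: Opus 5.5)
We derive the induced transition by computing the marginal law of the selected sequence $\dv^\star$. The randomness has two layers: first the i.i.d. proposals $\dv_1,\dots,\dv_N\sim K_{\theta}^\mathrm{SR}(\cdot\mid\dv')$, then the Gumbel-perturbed selection in Eq.~\eqref{eq:gumbel_bon}. By Eq.~\eqref{eq:finite_candidate_softmax}, the selection has a softmax law conditional on the candidates.

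Fix a target sequence $\xb$. We write the event $\{\dv^\star=\xb\}$ as the disjoint union over which index $J=\ell$ is selected, intersected with $\{\dv_\ell=\xb\}$. By the law of total probability and the product form of the joint proposal density,
\begin{equation*}
K_{\theta,\psi}^\mathrm{BoN}(\xb\mid\dv')=\textstyle\sum_{\ell=1}^N \EE_{\dv_{1:N}}\sbr{\mathbf{1}\{\dv_\ell=\xb\}\,\frac{\exp(\tau\scorer(\dv_\ell))}{\sum_{j}\exp(\tau\scorer(\dv_j))}}.
\end{equation*}
The proposals are exchangeable and the softmax weight is symmetric under permutations of the non-selected candidates. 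So all $N$ summands are equal, and the sum equals $N$ times the $\ell=1$ term. This is the content of the ``without loss of generality'' relabeling: the selected candidate is called $\dv_1$.

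For the $\ell=1$ term, we condition on $\dv_1$. This lets us pull out the indicator, which yields the factor $K_{\theta}^\mathrm{SR}(\dv_1\mid\dv')$ with $\dv_1=\xb$. The remaining expectation is then over $\dv_{2:N}$ drawn i.i.d. from $K_{\theta}^\mathrm{SR}(\cdot\mid\dv')$. This is exactly the expression in Eq.~\eqref{eq:refine_prob_bon}.

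The constant $N$ is absorbed into the proportionality. We also check that the result is a valid kernel. Summing over $\xb$ recovers the total selection probability $\EE[\sum_\ell \text{softmax}_\ell]=1$, so the normalizing constant is exactly $N$ and the kernel integrates to one.

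The main subtlety is the exchangeability step. When several candidates coincide with $\xb$, the indicator decomposition must still count each selection index separately. This works because we sum over the index $J$ rather than over distinct sequences, so no double counting arises. The Gumbel-Max identity itself is taken from Eq.~\eqref{eq:finite_candidate_softmax}. Beyond that, the proof only requires measurability of $\scorer$ and the product structure of the proposals.
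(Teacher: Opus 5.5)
Your proof is correct and follows the same route as the paper. You condition on the first candidate equaling $\xb$, pull out $K_{\theta}^{\mathrm{SR}}(\xb\mid\dv')$, and average the softmax selection weight over the i.i.d.\ remaining candidates $\dv_{2:N}$. You also make explicit what the paper leaves implicit in the ``$\propto$'': by exchangeability the other $N-1$ index contributions are identical, so the proportionality constant is exactly $N$ and the induced kernel is properly normalized.
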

which shows that best-of-$N$ refinement selects the self-refinement proposal toward candidates preferred by the learned scorer. 
Optimizing Eq.~\eqref{eq:conditional_refinement_loss} with the transition defined in Eq.~\eqref{eq:refine_prob_bon} leads to a tractable objective for best-of-$N$ refinement
\begin{equation}
  \min_{\psi}   \Lcal_{\mathrm{BoN}}\rbr{\psi}  := 
  \EE_{t,\,\xb_1\sim p_0,\, \cbr{\dv_i}_{i=2}^N\sim K_{\theta}^\mathrm{SR}\rbr{\cdot\mid \dv'}}\sbr{\textstyle
   - \scorer\rbr{\xb_1} + \log\sum_{j=1}^N\exp\rbr{\scorer\rbr{\xb_j}}
  }.
  \label{eq:bon_loss}
\end{equation}
During this stage, the proposal distribution $K_{\theta}^\mathrm{SR}\rbr{\cdot\mid\dv'}$ is treated with stop-gradient, so that $\Lcal_{\mathrm{BoN}}$ optimizes only the scorer head parameters. 
Through Eq.~\eqref{eq:bon_loss}, the scorer learns how much more likely a candidate is under the target distribution than under the self-refinement proposal, enabling best-of-$N$ decoding to select higher-quality refinements from parallel candidates.

\subsection{Discussions and Implementation}
\label{sec:discussion}
We discuss several key properties of \method and its connections to related refinement-based generative modeling frameworks.

\paragraph{Flexible inference}
The learned refinement chain in Eq.~\eqref{eq:conditional_refinement_loss} naturally induces a sampling procedure of the same form as the backward process in Eq.~\eqref{eq:backward_joint}. 
However, forward-free refinement also enables more flexible inference beyond a fixed-horizon diffusion sampler. 
Since each refinement stage is trained to move the current marginal $p_{\theta,t}$ closer to the target distribution $p_0$, any intermediate marginal can serve as an approximate generator when computation is limited. 
Conversely, once the learned refinement dynamics approach the target distribution, the transition approximately preserves $p_0$: $p_0(\dv_0')
    \approx
    {\textstyle \int}
    K_{\theta}\rbr{\dv_0'\mid \dv_0}
    p_0(\dv_0)
    \,d\dv_0,$
which indicates that $p_0$ is approximately stationary under the learned refinement transition, allowing additional refinement iterations beyond the training horizon. 
Thus, \method provides a controllable quality-cost tradeoff at inference time, where one can stop early for faster decoding or apply more refinement steps to improve sample quality.
Detailed empirical studies are provided in Sec.~\ref{sec:exp_inference}.

\paragraph{Connections to iterative refinement methods}
The proposed formulation is related to several broader ideas in generative modeling and search. 
Consistency models~\cite{consistency-model} learn direct maps from arbitrary noisy states $\xb_t$ to clean data $\xb_0$ and enforce agreement along a prescribed diffusion trajectory. 
\method shares the $\xb_0$-prediction perspective but eliminates the prescribed trajectory. This perspective casts \method as a forward-free consistency model, where consistency is induced by the fixed-point behavior of the learned refinement operator rather than imposed along a predefined diffusion trajectory. 
The proposed best-of-N design is also related to learning-to-search~\citep{learning-to-search}, since the refinement $K_{\theta, \psi}^\mathrm{BoN}$ learns to evaluate among model-generated candidate drafts under the same dynamics used at inference time. 
Finally, the marginal recursion in Eq.~\eqref{eq:refined_marginal} is conceptually related to power iteration over distributions, where repeated application of an operator moves a distribution toward a fixed point. 
Variational power methods~\citep{power-iteration} estimate stationary distributions of Markov chains by learning correction ratios from sampled transitions. 
From this view, the learned scorer acts as a density-ratio correction between target refinements and proposal candidates, allowing best-of-N refinement to reweight sampler-induced candidates toward the target distribution.

\paragraph{Key implementation choices}
We adopt several practical designs to make recursive refinement efficient and stable. 
\textbf{(1) Iteration-sampled training.} For each sample, we start from a blank sequence and run the proposer for $t$ refinement iterations, where $t\sim\mathrm{Unif}\rbr{\cbr{1,\ldots,K}}$ and $K$ is a hyperparameter. 
Gradients are stopped through the first $t-1$ refinement steps and applied only to the final step, covering both initial draft prediction and later-stage refinement with limited training overhead. 
\textbf{(2) Blockwise semi-causal attention.} For sequence generation, we use blockwise semi-causal attention with block size $B$, where token $i$ can attend to token $j$ if their block indices satisfy $b(j)\le b(i)$, allowing bidirectional attention within each block and causal attention across previous blocks, similar to block diffusion~\citep{block_diffusion}. 
The bidirectional within-block attention enables global refinement and scoring over the current block, while the causal structure across blocks preserves efficient left-to-right block generation. 
\textbf{(3) Efficient Best-of-$N$ scoring.} For Best-of-$N$ refinement, we generate high-quality candidates using marginal-cost beam search~(Pseudocode in Appendix~\ref{app:implementation}). 
Empirically, we select candidates using $\log K_{\theta,\mathrm{SR}}\rbr{\dv\mid\dv'}+\tau\scorer\rbr{\dv}$ to combine proposal likelihood with the learned correction and mitigate prediction noise in both proposer and scorer~(justifications for this design is available in Appendix~\ref{app:proofs}). 
\textbf{(4) Shared backbone.}
For efficiency, the proposer and scorer share the same backbone parameters, with only a lightweight scorer head added for Best-of-$N$ selection. 
Further details on the attention mask, training and inference pseudocode, scorer architecture, and hyperparameter choices are provided in Appendix~\ref{app:implementation} and Appendix~\ref{app:training_details}.

\subsection{Connection to Looped Transformers}
\label{sec:looped_connection}

\method can be understood directly as a Looped Transformer, which repeatedly reuses a shared Transformer module to refine a recurrent state, increasing effective computation without increasing the number of distinct model parameters~\citep{dehghani2019universal,saunshi2025latentthoughts,geiping2025recurrentdepth}.
In \method, the recurrent state is the model-generated block-level token draft itself, together with the confidence and commitment information used by the decoder.
The same refinement backbone is applied at every loop, and the resulting draft is supplied to the next pass.
Thus, \method realizes looped computation through explicit sequence refinement rather than through a persistent latent residual stream.

This choice makes supervision particularly simple.
Each loop state is already a candidate text output and can therefore be compared directly with the clean target.
Under iteration-sampled training, the earlier passes generate the draft under stop-gradient and only the final refinement pass receives gradients.
Accordingly, \method does not require BPTT across refinement iterations, although standard backpropagation is still performed within the final Transformer pass.
Additional loops are also not computationally free: reaching a deeper sampled draft requires additional forward-only refinement passes.
This detached training strategy is related to random deep supervision for latent Looped Transformers~\citep{park2026loopus}, but here it arises naturally from directly supervised token drafts.

The looped interpretation also clarifies the inference design.
Reusing the same backbone provides recurrent-depth scaling without increasing parameter count, while the explicit draft state preserves block-parallel token updates and can branch into multiple candidates for Best-of-$N$ selection.
Latent Looped Transformers may retain a richer hidden state across loops; \method instead restricts the recurrent state to a vocabulary-aligned draft in exchange for direct supervision, interpretable intermediate outputs, and simple branching.
We view these as complementary designs within the Looped Transformer family rather than claiming that one recurrent state parameterization is universally preferable.

\vspace{-10pt}
\section{Evaluations}
\vspace{-10pt}

\label{sec:experiments}

\begin{table}[t!]
\centering
\caption{
Performance comparison (\%) of AR and diffusion base models under 8B parameters across general knowledge, mathematics, and coding benchmarks. 
Numbers in parentheses denote the number of shots used for evaluation. 
Among diffusion base models, the best and second-best scores are highlighted in \textbf{bold} and \underline{underlining}, respectively. 
$^{*}$The DiffuLLaMA-7B GSM8K result is taken from the corresponding fine-tuned setting reported in the original source. 
``--'' indicates that the model is not open-sourced and the corresponding result is not reported, or meaningful generation could not be obtained.
}
\label{tab:main_results}
\setlength{\tabcolsep}{4.5pt}
\renewcommand{\arraystretch}{1.2}
\resizebox{\linewidth}{!}{
\begin{tabular}{l cc cc cc}
\toprule
\textbf{Tasks ($\rightarrow$)}
& \multicolumn{2}{c}{\textbf{General Knowledge}}
& \multicolumn{2}{c}{\textbf{Mathematics}}
& \multicolumn{2}{c}{\textbf{Coding}} \\
\cmidrule(lr){2-3}\cmidrule(lr){4-5}\cmidrule(lr){6-7}
\textbf{Method ($\downarrow$)}
& MMLU (5) & GPQA-D (5)
& GSM8K (4) & MATH-500 (4)
& HumanEval (0) & HumanEval+ (0) \\
\midrule
\rowcolor{gray!10} \multicolumn{7}{l}{\quad\emph{AR Base Models}} \\
\midrule
Qwen2.5-3B-Base~\citep{qwen25}
& 65.60 & 26.30
& 79.10 & 42.60
& 42.10 & 36.00 \\

Qwen3-4B-Base~\citep{qwen3}
& 71.76 & 33.68
& 84.99 & 52.60
& 65.24 & 57.93 \\

\midrule
\rowcolor{gray!10} \multicolumn{7}{l}{\quad\emph{Diffusion Base Models}} \\
\midrule
DiffuLLaMA-7B~\citep{DiffuLLaMA}
& 27.42 & 22.80
& 63.10$^{*}$ & 11.00
& -- & -- \\

LLaDA-MoE-7B-A1B-Base~\citep{llada-moe}
& 64.59 & 28.50
& 66.41 & 41.20
& 45.73 & 42.07 \\

Dream-7B-Base~\citep{dream7b}
& \underline{69.50} & \textbf{36.60}
& 81.00 & 39.20
& 57.90 & 50.00 \\

LLaDA-8B-Base~\citep{LLaDA}
& 65.90 & 25.20
& 74.37 & 39.60
& 35.40 & 31.10 \\

TiDAR-8B (Trust AR)~\citep{tidar}
& \textbf{76.57} & --
& 79.83 & --
& 55.49 & 52.44 \\

TiDAR-8B (Trust Diff)~\citep{tidar}
& -- & --
& 80.44 & --
& 57.93 & 55.49 \\

BlockDiff-4B~\citep{block_diffusion}
& 64.56 & 33.68
& 77.33 & 50.60
& 57.32 & 51.83 \\

\rowcolor{gray!15}
\method-4B (Self-refine)
& 69.11 & \underline{36.27}
& \underline{83.55} & \underline{51.98}
& \underline{60.98} & \underline{58.54} \\

\rowcolor{gray!20}
\method-4B (Best-of-N)
& 69.11 & \underline{36.27}
& \textbf{84.15} & \textbf{53.00}
& \textbf{63.41} & \textbf{59.76} \\
\bottomrule
\end{tabular}
}
\vspace{-3mm}
\end{table}

\paragraph{Training setups} We build \method on top of Qwen3-4B-Base~\citep{qwen3} and perform continued pretraining on a curated 10B-token data mixture containing several open-sourced datasets, which we decontaminated against all evaluation benchmarks. 
The model adopts blockwise causal attention with a block size of $4$, allowing bidirectional attention within each block while preserving causal attention across blocks. 
We implement this attention pattern with FlexAttention~\citep{flex_attn} to improve training efficiency.
Training proceeds in two phases:
1) We continue pretraining for 8B tokens with the self-refinement objective in Eq.~\eqref{eq:self_refine_loss}, with the budget of refinement forwards $K=3$;
2) We then branch into two refinement variants for the remaining 2B tokens. 
The self-refinement model continues to optimize the refinement objective, whereas the best-of-N model optimizes Eq.~\eqref{eq:bon_loss} using four model-generated negative candidates per instance~($N=4$). 
All training is conducted on 64 NVIDIA GH200 GPUs with a global batch size of 128.
Further implementation details are presented in Appendix~\ref{app:training_details}.

\paragraph{Baseline models}
We compare \method-4B against representative diffusion base models below 8B parameters, including \textbf{DiffuLLaMA}~\citep{DiffuLLaMA}, \textbf{LLaDA-MoE-7B-A1B-Base}~\citep{llada-moe}, \textbf{LLaDA-8B-Base}~\citep{LLaDA}, \textbf{Dream-7B-Base}~\citep{dream7b}, and \textbf{TiDAR-8B}~\citep{tidar}~(for which we report both the trusting-AR and trusting-diffusion variants). 
For a controlled efficiency comparison with masked-diffusion training, we further train a \textbf{BlockDiff-4B} baseline using the same training setups as \method, following the blockwise masked diffusion formulation in~\citep{block_diffusion}.

\paragraph{Evaluation benchmarks}
We evaluate \method and baselines on six benchmarks spanning \textbf{General Knowledge}~(MMLU~\citep{mmlu} and GPQA-Diamond~\citep{gpqa}), \textbf{Mathematical Reasoning}~(GSM8K~\citep{gsm8k} and MATH-500~\citep{math500}), and \textbf{Code Generation}~(HumanEval~\citep{humaneval} and HumanEval+~\citep{humaneval_plus}). 
All evaluations follow the standard shot settings for base models~\cite{qwen3} and report the final accuracy~(\%). 
As depicted in Sec.~\ref{sec:parameterization_design}, we investigate \method of two parameterization designs: \emph{Self-refinement} uses greedy decoding, and \emph{Best-of-N} generates candidates via greedy marginal-cost beam search and evaluates them by $\log K_{\theta,\mathrm{SR}}\rbr{\dv\mid\dv'}+\tau\scorer\rbr{\dv}$ with $\tau=0.1$.

\subsection{Main Results}

Table~\ref{tab:main_results} presents the decoding performance of \method and seven diffusion base models across general knowledge, mathematical reasoning, and code generation benchmarks. 
Overall, \method-4B achieves the strongest diffusion-model performance on four reasoning and coding benchmarks, despite being smaller than several 7-8B competitors. 
The two variants of \method validate the effectiveness of forward-free refinement. 
With best-of-N refinement, \method improves over the strongest prior diffusion baselines by at least 2.60\% on mathematical reasoning and 4.27\% on code generation, and delivers gains of roughly 5-15\% over representative larger diffusion models on most reasoning and coding tasks. 
Notably, \method achieves these gains with only a 10B-token continued-pretraining budget, whereas strong AR-adapted diffusion baselines such as Dream-7B-Base~\citep{dream7b} and TiDAR-8B~\citep{tidar} reportedly use 600B and 50B tokens, respectively.
Self-refinement also consistently outperforms existing diffusion baselines by at least 4.37\% on reasoning and coding tasks, showing that recursive draft refinement alone is already effective. 
Under the same initial model and training data, \method also surpasses the BlockDiff-4B baseline trained with a masked-diffusion objective by 5.06\% on average across all evaluated benchmarks~\citep{block_diffusion}. 
Notably, \method-4B also matches or exceeds the Qwen3-4B AR base model across most of the evaluated benchmarks, suggesting that forward-free refinement can substantially narrow the gap between diffusion and autoregressive language models.

\subsection{Inference Efficiency}
\label{sec:exp_inference}
\begin{figure}[!t]
    \centering
    \includegraphics[width=\linewidth]{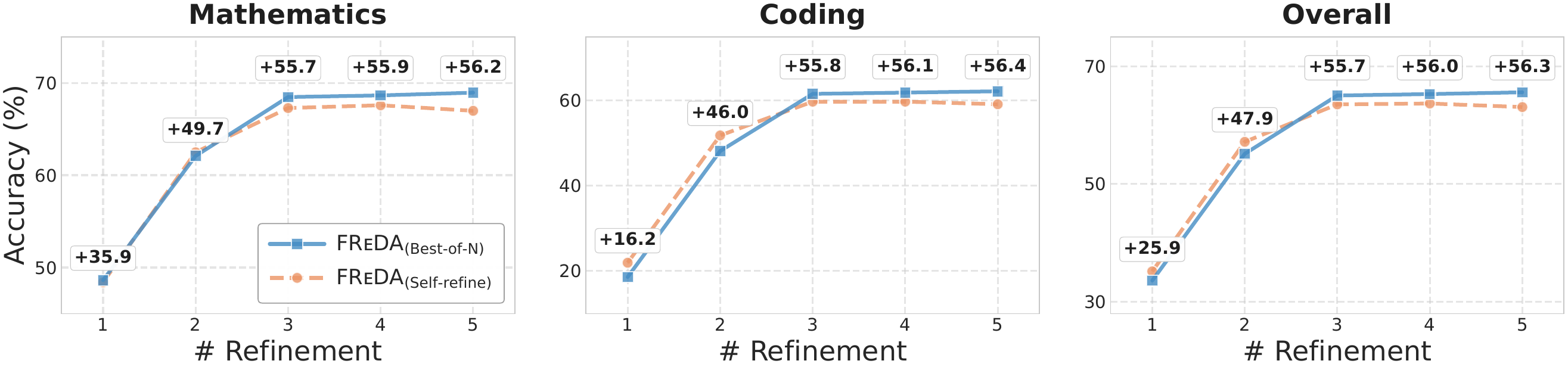}
    \caption{%
        \textbf{Iterative refinement continuously improves \method.}
        Accuracy~(\%) of \method~(Best-of-N) and \method~(Self-refine) with increasing number of refinement iterations over initial draft on Mathematics, Coding, and Overall. 
        We report the gain over the single-iteration baseline of the better variant at each iteration.     
        For refinement iterations 4 and 5, early stopping is disabled to isolate the effect of additional refinement.
    }
    \label{fig:iter-scaling}
\end{figure}
\vspace{-10pt}

\paragraph{Improvement through refinement}
As discussed in Sec.~\ref{sec:discussion}, \method supports flexible inference by applying additional refinement iterations after the initial draft prediction. 
Figure~\ref{fig:iter-scaling} shows how performance evolves as the number of refinement steps increases. 
Across benchmarks of different domains, both parameterizations of \method substantially improve over the single-step draft baseline, with overall accuracy increasing from 25.9\% at one refinement step to 56.3\% after five steps. 
The gains are most pronounced in the early refinement stage: moving from one to three steps improves overall accuracy by nearly 30\%, after which performance begins to converge. 
This convergence is consistent with a block size of $4$, in which most refinable positions within each block are revisited after a small number of iterations.
Comparing the two parameterizations, self-refinement is slightly stronger in the low-iteration regime, suggesting that greedy refinement can provide an efficient trajectory when only limited refinement computation is available. 
As the refinement budget increases, best-of-N becomes more effective and benefits from parallel candidate exploration and scorer-based selection. 
Overall, \method exhibits consistent self-improvement through iterative refinement and provides a controllable quality-compute tradeoff at inference time.

\begin{figure}[!t]
    \centering
    \includegraphics[width=\linewidth]{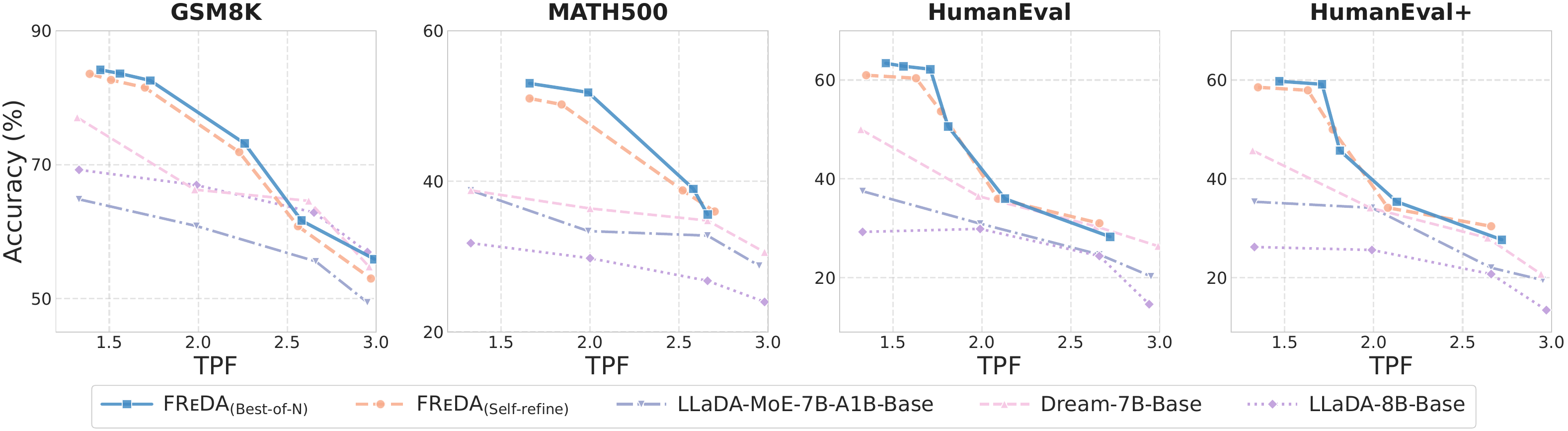}
    \caption{%
\textbf{\method Pareto frontier outperforms open diffusion baselines across math and coding tasks.}
Accuracy~(\%) of \method~(Best-of-N) and \method~(Self-refine) against tokens-per-forward~(TPF) on GSM8K, MATH-500, HumanEval, and HumanEval+.
The two \method curves are the Pareto envelopes of a joint sweep over the number of iterations and the confidence threshold for early stop; diffusion baselines follow each model's native TPF schedule.}
    \label{fig:pareto-frontier}
\end{figure}

\paragraph{Speed--quality Pareto frontier}
A key feature of diffusion language models is their ability to generate multiple tokens per forward pass, often at the expense of sample quality. 
Figure~\ref{fig:pareto-frontier} compares the speed-quality Pareto frontiers of \method and diffusion baselines. 
Across a wide range of tasks, both variants of \method achieve 1.5-2.5 tokens per forward while delivering at least 10\% higher decoding accuracy. 
Moreover, \method matches the best performance of existing diffusion baselines with fewer forward passes, yielding an average generation speedup of 1.5-1.8$\times$ at matched quality.
Even in the high-throughput regime with more than 2.5 tokens per forward pass, \method remains competitive with, and often surpasses, the diffusion baselines. 
These results highlight that forward-free refinement improves not only generation quality but also the quality-speed trade-off over existing masking diffusion language models.
\subsection{Ablation Studies}

\begin{wraptable}{r}{0.43\textwidth}
\vspace{-1.0em}
\centering
\caption{
Ablation of two proposed decoding strategies. 
Best-of-N selects candidates using $\log K_{\theta,\mathrm{SR}}\rbr{\dv\mid\dv'}+\tau\scorer\rbr{\dv}$. 
}
\label{tab:alpha_ablation}
\renewcommand{\arraystretch}{1.12}
\resizebox{\linewidth}{!}{
\begin{tabular}{l c ccc}
\toprule
\textbf{Strategy} 
& $\boldsymbol{\tau}$
& \textbf{Math}
& \textbf{Coding}
& \textbf{Overall} \\
\midrule
Self-refine
& -- 
& 67.77
& 59.76
& 63.76 \\

\midrule
Best-of-N
& 0.1
& {68.58}
& \textbf{61.59}
& \textbf{65.08} \\

Best-of-N
& 0.3
& {68.58}
& {61.28}
& {64.93} \\

Best-of-N
& 1.0
& \textbf{68.62}
& 60.07
& 64.34 \\
\bottomrule
\end{tabular}
}
\vspace{-1.0em}
\end{wraptable}
\paragraph{Temperature factor in Best-of-N scorer}
Table~\ref{tab:alpha_ablation} ablates the temperature $\tau$ in Best-of-N selection, where candidates are scored by $\log K_{\theta,\mathrm{SR}}\rbr{\dv\mid\dv'}+\tau\scorer\rbr{\dv}$. 
As shown in Sec.~\ref{sec:parameterization_design}, the scorer is trained to estimate a correction to the backbone likelihood, corresponding to the log-probability gap between the target  distribution and the proposal distribution. 
With $\tau=0.1$, Best-of-N achieves the best overall performance and improves the average score of self-refinement by 1.32\%.
This suggests that the scorer is most effective as a calibrated correction signal rather than a dominant ranking objective. 
Too large weights preserve similar mathematical performance but degrade coding and overall accuracy, suggesting that over-amplifying the scorer can introduce ranking noise. 
A reasonably small weight likely flips candidate preferences only when the estimated refinement gap is large, while retaining the backbone likelihood as the primary signal when scorer estimates are uncertain.

\begin{wraptable}{r}{0.45\textwidth}
\centering
\caption{
Ablation on refinement iterations and Best-of-N width. 
Iter/Width denotes the number of refinement iterations and the number of Best-of-N candidates. 
}
\label{tab:bon_width_ablation}
\setlength{\tabcolsep}{4.0pt}
\renewcommand{\arraystretch}{1.12}
\resizebox{\linewidth}{!}{
\begin{tabular}{c ccc}
\toprule
\textbf{Iter/Width}
& \textbf{Math}
& \textbf{Coding}
& \textbf{Overall} \\
\midrule
$2/2$
& 48.13
& 18.60
& 33.36 \\
$2/4$
& 48.82\,{\small\textcolor{ForestGreen}{(+0.70)}}
& 19.21\,{\small\textcolor{ForestGreen}{(+0.61)}}
& 34.01\,{\small\textcolor{ForestGreen}{(+0.65)}} \\
\midrule
$4/2$
& 68.14
& 60.98
& 64.56 \\
$4/4$
& 68.58\,{\small\textcolor{ForestGreen}{(+0.44)}}
& 61.59\,{\small\textcolor{ForestGreen}{(+0.61)}}
& 65.08\,{\small\textcolor{ForestGreen}{(+0.52)}} \\
\bottomrule
\end{tabular}
}
\vspace{-1.0em}
\end{wraptable}
\paragraph{Search width in Best-of-N}
Table~\ref{tab:bon_width_ablation} studies the effect of increasing the Best-of-N search width from 2 to 4 candidates under different refinement budgets. 
Although the gains are modest, increasing the width consistently improves performance across mathematics, coding, and overall averages. 
With two refinement iterations, the larger candidate set improves the overall score by 0.65\%, and with four iterations, it further improves it by 0.52\%. 
This suggests that broader candidate exploration provides a reliable complementary benefit to iterative refinement, and \method can exploit additional parallel inference computation by using a wider Best-of-N search.

\paragraph{Model parameterization}
We compare the two refinement parameterizations introduced in Sec.~\ref{sec:parameterization_design}. 
In Table~\ref{tab:main_results}, Best-of-N achieves better final performance than self-refinement. This improvement comes from its ability to explore multiple candidates in parallel and use the learned scorer to select more reliable refinements. 
Hypothesically, the scorer may provide a useful correction signal beyond the base model's likelihood, helping rectify generation-format errors in chain-of-thought reasoning and select syntactically or functionally better code snippets.
This advantage is also reflected in Figure~\ref{fig:pareto-frontier}, where Best-of-N yields a slightly stronger Pareto frontier in the medium-throughput regime. 
On the other hand, self-refinement is more effective under limited refinement budgets, as shown in Figure~\ref{fig:iter-scaling}, and often performs better in the high-throughput regime above 2.5 TPF. 
Overall, self-refinement is preferable for fast decoding, while Best-of-N provides stronger quality when additional candidate exploration is affordable.

\section{Related Work}

\label{sec:related}
\paragraph{Discrete diffusion language models}
Discrete diffusion models adapt the iterative denoising paradigm to categorical spaces.
Early work on discrete diffusion studies how to design corruption kernels for categorical data, including uniform, structured, nearest-neighbor, and absorbing-state transitions~\citep{D3PM,multinomial_diffusion,ctdd}.
In language modeling, absorbing-state masking has become a dominant template, in which models learn to recover clean tokens from partially masked sequences~\citep{MDLM,MD4}.
Other formulations improve or reinterpret this paradigm through discrete score estimation, conditional-distribution modeling, or importance-sampling-based refinement of the reverse process~\citep{SEDD,MDM_secret,edlm, DFM}.
Recent scaling efforts further show that diffusion language models can be trained or adapted at scaled model sizes up to 100B, substantially improving generation quality and downstream performance~\citep{DiffuLLaMA,llada-moe,LLaDA,dream7b,sdar,dmax,llada20,llada21}.
Despite this progress, most methods still define learning through a prescribed forward corruption process.
\method departs from this paradigm by learning refinement directly from sampler-induced intermediate states.
\vspace{-4pt}

\paragraph{Looped transformers}
Looped transformers reuse shared parameters across depth to iteratively refine a recurrent state, beginning with Universal Transformers~\citep{dehghani2019universal} and extending to recent reasoning-oriented and recurrent-depth language models~\citep{saunshi2025latentthoughts,geiping2025recurrentdepth}.
\method belongs to this family but uses the model-generated token draft itself as the recurrent state.
This draft-space design makes each loop directly supervisable by the clean target, allowing preceding loops to be detached and avoiding BPTT across refinement iterations.
LoopUS similarly uses random deep supervision to train detached latent refinement trajectories~\citep{park2026loopus}; \method differs by coupling looped computation with sampler-induced diffusion refinement and block-parallel draft generation.
\vspace{-4pt}

\paragraph{Refinement-based language generation}
Drafting and refinement have been studied as alternatives to purely left-to-right AR generation.
Edit-based generation methods revise sequences through insertion, deletion, or iterative rewriting, providing early evidence that language generation can be organized as refinement rather than strict autoregression~\citep{levenshtein,diffuser}.
Corrector-style samplers further model refinement as a reverse causal process, in which tokens are corrected while conditioning on already generated future context~\citep{correctorar}.
In diffusion language models, several methods augment masked-diffusion backbones with token-level correction or remasking heads that decide which positions should be revisited during generation~\citep{informedcorrector,remedi,prism, GIDD}.
Other approaches model refinement at the sequence level, combine diffusion proposals with AR verification and rejection sampling~\citep{selfspec,tidar}, or employ self-correction~\citep{rediff,corrective-dlm,llada21}.
These methods show that refinement is a useful mechanism for improving draft quality and correcting local errors.
Nevertheless, most of them still operate within a predefined masking or corruption framework, so the refinement dynamics remain tied to hand-designed intermediate states.
By contrast, \method treats model-generated drafts themselves as the intermediate states and learns a forward-free refinement operator.
\vspace{-4pt}

\paragraph{Efficient decoding for language models}
Improving inference efficiency is a central challenge for language models, especially when generation requires many sequential forward passes.
For AR models, a large body of work accelerates decoding through multi-token prediction, speculative drafting, verification, and lightweight draft models~\citep{mtp-parallel,deepseek-v3,eagle1,eagle2,eagle3}.
Diffusion language models offer a different opportunity: they can update multiple tokens per forward pass, but this parallelism often introduces a quality--speed trade-off.
Prior work improves diffusion decoding by controlling confidence thresholds~\citep{diff-speed-conf, fast-dllm}, prediction entropy~\citep{eb-sampler}, or sampling policies~\citep{seed-diffusion, trace, credit-decoding}.
Other methods transfer diffusion-forcing ideas to discrete generation~\citep{diffusion-forcing,D2F}, approximate KV-cache reuse~\citep{fast-dllm,fast-dllm2, dKV_Cache}, or combine diffusion proposals with autoregressive likelihood correction~\citep{selfspec,tidar, dflash}.
These approaches primarily accelerate or stabilize inference under an existing diffusion or hybrid decoding process.
\method further improves the underlying refinement formulation itself, aligning training with the drafts used at inference time.

\section{Conclusion}
\label{sec:conclusion}

We introduced \method, a forward-free diffusion language model that learns generation through recursive refinement rather than a predefined corruption path.
By using model-generated drafts as intermediate states, \method aligns training with sampling dynamics while avoiding hand-designed discrete perturbations.
The framework is neighborhood-agnostic, sampler-adaptive, and compatible with flexible parameterizations including self-refinement and Best-of-$N$ refinement.
The repeated use of the shared refinement backbone also makes \method a draft-space Looped Transformer; because preceding refinement passes are detached, training avoids BPTT across refinement iterations while retaining ordinary backpropagation within the final pass.
Empirically, \method-4B outperforms larger 7--8B diffusion baselines while achieving a 1.5--1.8$\times$ average speedup.
Overall, forward-free recursive refinement provides a promising direction for efficient diffusion language modeling and a new way to realize looped computation over explicit text drafts.

\clearpage
\bibliography{ref,looped_refs}
\bibliographystyle{abbrv}

\clearpage
\appendix
\section{Theoretical Derivations}\label{app:proofs}

\subsection{Proof of Proposition~\ref{prop:joint_surrogate}}

\begin{proof}
The marginal refinement objective is
\begin{equation*}
    \KL\!\rbr{
        p_0\rbr{\dv}
        \,\|\,
         {  \int}
    K_{\theta,t}\rbr{\dv\mid\dv'}
    p_{\theta,t}\rbr{\dv'}
    \,d\dv'
    }
    =
    {  \int}
    p_0\rbr{\dv}
    \log
    \frac{
        p_0\rbr{\dv}
    }{
        {  \int}
        K_{\theta,t}\rbr{\dv\mid\dv'}
        p_{\theta,t}\rbr{\dv'}
        \,d\dv'
    }
    \,d\dv .
\end{equation*}
By Jensen's inequality and the concavity of $\log$, for every fixed $\dv$,
\begin{equation*}
    {  \int}
    q\rbr{\dv'|\dv}
    \log \frac{K_{\theta,t}\rbr{\dv\mid\dv'}
    p_{\theta,t}
    \rbr{\dv'}}{{q\rbr{\dv'|\dv}}}
    \,d\dv'
    \le
    \log
    {  \int}
    {q\rbr{\dv'|\dv}}\frac{K_{\theta,t}\rbr{\dv\mid\dv'}
    p_{\theta,t}
    \rbr{\dv'}}{{q\rbr{\dv'|\dv}}}
    \,d\dv' .
\end{equation*}
Multiplying by $p_0\rbr{\dv}$ and integrating over $\dv$ gives
\begin{equation*}
    \KL\!\rbr{
        p_0\rbr{\dv}
        \,\middle\|\,
        {  \int}
        K_{\theta,t}\rbr{\dv\mid\dv'}
        p_{\theta,t}\rbr{\dv'}
        \,d\dv'
    }
    \le
    \KL\!\rbr{
        p_0\rbr{\dv}q(\dv'|\dv)
        \,\middle\|\,
        K_{\theta,t}\rbr{\dv\mid\dv'}p_{\theta,t}\rbr{\dv'}
    }.
\end{equation*}
This proves the desired upper bound.
\end{proof}

\subsection{Derivation of Refinement Objective in Eq.~\eqref{eq:conditional_refinement_loss}}

We show that minimizing the joint surrogate reduces to conditional negative log-likelihood training. 
We have proven that the original matching objective in Eq.~\eqref{eq:ideal_refinement} is upper-bounded by the joint surrogate
\begin{equation*}
    \KL\!\rbr{
        p_0\rbr{\dv}p_{\theta,t}\rbr{\dv'}
        \,\middle\|\,
        K_{\theta,t}\rbr{\dv\mid\dv'}p_{\theta,t}\rbr{\dv'}
    }.
\end{equation*}
Expanding the KL divergence gives
\begin{align*}
    &
    \KL\!\rbr{
        p_0\rbr{\dv}p_{\theta,t}\rbr{\dv'|\dv}
        \,\middle\|\,
        K_{\theta,t}\rbr{\dv\mid\dv'}p_{\theta,t}\rbr{\dv'}
    }
    \nonumber\\
    &=
    {  \int}
    p_0\rbr{\dv}p_{\theta,t}\rbr{\dv'|\dv}
    \log
    \frac{
        p_0\rbr{\dv}p_{\theta,t}\rbr{\dv'|\dv}
    }{
        K_{\theta,t}\rbr{\dv\mid\dv'}p_{\theta,t}\rbr{\dv'}
    }
    \,d\dv\,d\dv'
    \nonumber\\
    &=
    {  \int}
    p_0\rbr{\dv}p_{\theta,t}\rbr{\dv'}
    \log \rbr{p_0\rbr{\dv}p_{\theta,t}\rbr{\dv'}}
    \,d\dv\,d\dv'
    -
    {  \int}
    p_0\rbr{\dv}p_{\theta,t}\rbr{\dv'|\dv}
    \log \rbr{K_{\theta,t}\rbr{\dv\mid\dv'}p_{\theta,t}\rbr{\dv'}}
    \,d\dv\,d\dv'
    \nonumber\\
    &
    \propto
    -
    \EE_{\dv\sim p_0,\,\dv'\sim p_{\theta,t}}
    \sbr{
        \log K_{\theta,t}\rbr{\dv\mid\dv'}
    }.
\end{align*}
Therefore, minimizing the joint surrogate over $K_{\theta,t}$ is equivalent to minimizing
\begin{equation*}
    \mathcal{L}_{t}\rbr{\theta}
    =
    -
    \EE_{\dv\sim p_0,\,\dv'\sim p_{\theta,t}}
    \sbr{
        \log K_{\theta,t}\rbr{\dv\mid\dv'}
    }.
\end{equation*}
This proves Eq.~\eqref{eq:conditional_refinement_loss}.

\subsection{Proof of Proposition~\ref{prop:bon_transition}}
\label{app:proof_bon_transition}

\begin{proof}
Given a draft $\dv'$, the self-refinement proposer draws $N$ candidates independently,
\begin{equation*}
    \dv_1,\ldots,\dv_N
    \sim
    K_{\theta}^{\mathrm{SR}}\rbr{\cdot\mid\dv'}.
\end{equation*}
Conditioned on the candidate set $\cbr{\dv_i}_{i=1}^{N}$, the Gumbel-Max selection rule selects candidate $j$ with probability
\begin{equation*}
    \PP\rbr{
        J=j
        \mid
        \cbr{\dv_i}_{i=1}^{N}
    }
    =
    \frac{
        \exp\rbr{\tau\scorer\rbr{\dv_j}}
    }{
        \sum_{\ell=1}^{N}
        \exp\rbr{\tau\scorer\rbr{\dv_\ell}}
    }.
    \label{eq:app_bon_selection_prob}
\end{equation*}
We first compute the contribution of the event that the first candidate equals a given sequence $\dv$ and is selected. 
Conditioning on $\dv_1=\dv$ and integrating over the remaining candidates gives
\begin{equation}
\begin{aligned}
    &
    \PP\rbr{
        \dv_1=\dv 
        \mid
        \dv'
    }
    \\
    &=
    { \int} \PP\rbr{\texttt{$\xb_1$ is the best among $\cbr{\dv_i}_{i=1}^N$}\mid \xb',\cbr{\dv_i}_{i=1}^N} K_{\theta,\mathrm{SR}}\rbr{\cbr{\dv_i}_{i=1}^N\mid\xb'} \,d\dv_{2:N}\\
    &=
    K_{\theta}^{\mathrm{SR}}\rbr{\dv\mid\dv'}
    {\int}
    \prod_{i=2}^{N}
    K_{\theta}^{\mathrm{SR}}\rbr{\dv_i\mid\dv'}
    \frac{
        \exp\rbr{\tau\scorer\rbr{\dv}}
    }{
        \exp\rbr{\tau\scorer\rbr{\dv}}
        +
        \sum_{j=2}^{N}
        \exp\rbr{\tau\scorer\rbr{\dv_j}}
    }
    \,d\dv_{2:N}
    \\
    &=
    K_{\theta}^{\mathrm{SR}}\rbr{\dv\mid\dv'}
    \EE_{\dv_{2:N}\sim K_{\theta}^{\mathrm{SR}}\rbr{\cdot\mid\dv'}}
    \sbr{
        \frac{
            \exp\rbr{\tau\scorer\rbr{\dv}}
        }{
            \exp\rbr{\tau\scorer\rbr{\dv}}
            +
            \sum_{j=2}^{N}
            \exp\rbr{\tau\scorer\rbr{\dv_j}}
        }
    }.
\end{aligned}
\label{eq:app_first_candidate_contribution}
\end{equation}
Therefore, Eq.~\eqref{eq:app_first_candidate_contribution} gives the induced best-of-$N$ refinement transition,
\begin{equation}
\begin{aligned}
    K_{\theta,\psi}^{\mathrm{BoN}}\rbr{\dv\mid\dv'}
    \propto K_{\theta}^{\mathrm{SR}}\rbr{\dv\mid\dv'}
    \EE_{\dv_{2:N}\sim K_{\theta}^{\mathrm{SR}}\rbr{\cdot\mid\dv'}}
    \sbr{
        \frac{
            \exp\rbr{\tau\scorer\rbr{\dv}}
        }{
            \exp\rbr{\tau\scorer\rbr{\dv}}
            +
            \sum_{j=2}^{N}
            \exp\rbr{\tau\scorer\rbr{\dv_j}}
        }
    }.
\end{aligned}
\end{equation}
This proves Proposition~\ref{prop:bon_transition}.
\end{proof}

\subsection{Derivation of Eq.~\eqref{eq:bon_loss}}
\label{app:derive_bon_loss}

We derive the tractable scorer objective by plugging the best-of-$N$ refinement transition into the conditional refinement loss. 
Let $\dv_1\sim p_0$ denote the positive target sequence, and let $\dv_2,\ldots,\dv_N$ be proposal candidates sampled from $K_{\theta}^{\mathrm{SR}}\rbr{\cdot\mid\dv'}$. 
From Eq.~\eqref{eq:app_first_candidate_contribution}, the first-candidate contribution to the best-of-$N$ transition is
\begin{equation*}
\begin{aligned}
    &
    K_{\theta}^{\mathrm{SR}}\rbr{\dv_1\mid\dv'}
    \EE_{\dv_{2:N}\sim K_{\theta}^{\mathrm{SR}}\rbr{\cdot\mid\dv'}}
    \sbr{
        \frac{
            \exp\rbr{\tau\scorer\rbr{\dv_1}}
        }{
            \exp\rbr{\tau\scorer\rbr{\dv_1}}
            +
            \sum_{j=2}^{N}
            \exp\rbr{\tau\scorer\rbr{\dv_j}}
        }
    } .
\end{aligned}
\label{eq:app_positive_transition_contribution}
\end{equation*}
Both $N$ and $K_{\theta}^{\mathrm{SR}}\rbr{\dv_1\mid\dv'}$ are independent of the scorer parameters during scorer training, since the proposer is treated with stop-gradient. 
Thus, the scorer-dependent part of the conditional refinement loss in Eq.~\eqref{eq:conditional_refinement_loss} is
\begin{equation}
\begin{aligned}
    -
    \EE_{\dv_1\sim p_0}\sbr{
    \log
    \EE_{\dv_{2:N}\sim K_{\theta}^{\mathrm{SR}}\rbr{\cdot\mid\dv'}}
    \sbr{
        \frac{
            \exp\rbr{\tau\scorer\rbr{\dv_1}}
        }{
            \exp\rbr{\tau\scorer\rbr{\dv_1}}
            +
            \sum_{j=2}^{N}
            \exp\rbr{\tau\scorer\rbr{\dv_j}}
        }
    }
    }.
\end{aligned}
\label{eq:app_bon_log_expectation}
\end{equation}
To obtain a tractable objective, we apply Jensen's inequality. 
Since $-\log(\cdot)$ is convex, for any positive random variable $R$,
\begin{equation*}
    -\log \EE\sbr{R}
    \le
    \EE\sbr{-\log R}.
    \label{eq:app_jensen_bon}
\end{equation*}
Using
\begin{equation*}
    R
    =
    \frac{
        \exp\rbr{\tau\scorer\rbr{\dv_1}}
    }{
        \exp\rbr{\tau\scorer\rbr{\dv_1}}
        +
        \sum_{j=2}^{N}
        \exp\rbr{\tau\scorer\rbr{\dv_j}}
    },
\end{equation*}
Eq.~\eqref{eq:app_bon_log_expectation} is upper-bounded by
\begin{equation*}
\begin{aligned}
    -
    \EE_{t,\,\dv_1\sim p_0,\,\cbr{\dv_j}_{j=2}^{N}\sim K_{\theta}^{\mathrm{SR}}\rbr{\cdot\mid\dv'}}\sbr{
    \log
    \frac{
        \exp\rbr{\tau\scorer\rbr{\dv_1}}
    }{
        \exp\rbr{\tau\scorer\rbr{\dv_1}}
        +
        \sum_{j=2}^{N}
        \exp\rbr{\tau\scorer\rbr{\dv_j}}
    }}.
\end{aligned}
\end{equation*}
Therefore, dropping constants independent of the scorer gives
\begin{equation*}
\begin{aligned}
  &\Lcal_{\mathrm{BoN}}\rbr{\psi}\\
  =&
  \EE_{t,\,\dv_1\sim p_0,\,\cbr{\dv_j}_{j=2}^{N}\sim K_{\theta}^{\mathrm{SR}}\rbr{\cdot\mid \dv'}}
  \sbr{
    -\tau\scorer\rbr{\dv_1}
    +
    \log
    \left(
        \exp\rbr{\tau\scorer\rbr{\dv_1}}
        +
        \sum_{j=2}^{N}
        \exp\rbr{\tau\scorer\rbr{\dv_j}}
    \right)
  }.
\end{aligned}
\end{equation*}
This is Eq.~\eqref{eq:bon_loss}.

\subsection{Properties and Justifications of Recursive Marginal Refinement}
\label{app:refinement_justification}

Although the learned sampler has the backward-chain form
\begin{equation*}
    p_\theta\rbr{\dv_{0:T}}
    =
    p_{\theta,T}\rbr{\dv_T}
    \prod_{t=1}^{T}
    K_{\theta,t}\rbr{\dv_{t-1}\mid\dv_t},
\end{equation*}
its learning procedure differs from conventional diffusion because the intermediate states are sampled from the model rather than generated by a predefined forward corruption path. 
We justify the recursive marginal refinement objective through two properties.

\begin{proposition}[Marginal consistency]
\label{prop:marginal_consistency_independent}
Let $\gamma_t\rbr{\dv,\dv'}=p_0\rbr{\dv}p_{\theta,t}\rbr{\dv'}$ be the product joint used in the refinement surrogate. 
If a refinement kernel $K$ satisfies
\begin{equation*}
    \KL\!\rbr{
        p_0\rbr{\dv}p_{\theta,t}\rbr{\dv'}
        \,\middle\|\,
        K\rbr{\dv\mid\dv'}p_{\theta,t}\rbr{\dv'}
    }
    =
    0,
\end{equation*}
then the induced refined marginal matches the target distribution:
\begin{equation*}
    {\int}
    K\rbr{\dv\mid\dv'}p_{\theta,t}\rbr{\dv'}\,d\dv'
    =
    p_0\rbr{\dv}.
\end{equation*}
More generally, if the joint surrogate is small, then the marginal refinement objective is also small:
\begin{equation*}
    \KL\!\rbr{
        p_0
        \,\middle\|\,
        Kp_{\theta,t}
    }
    \le
    \KL\!\rbr{
        p_0\rbr{\dv}p_{\theta,t}\rbr{\dv'}
        \,\middle\|\,
        K\rbr{\dv\mid\dv'}p_{\theta,t}\rbr{\dv'}
    }.
\end{equation*}
\end{proposition}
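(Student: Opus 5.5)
Both claims follow from the chain rule for KL divergence, so the plan is to first establish the second (inequality) statement and then read off the first as its equality case.

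For the inequality, I will not re-use the Jensen argument from Proposition~\ref{prop:joint_surrogate} verbatim. That argument takes the coupling to be $p_0(\dv)q(\dv'|\dv)$, whereas here the coupling is the product $\gamma_t=p_0\otimes p_{\theta,t}$. This product is itself a valid coupling with marginals $p_0$ and $p_{\theta,t}$, so Proposition~\ref{prop:joint_surrogate} applies directly with $\gamma_t=p_0(\dv)p_{\theta,t}(\dv')$. The bound then follows immediately.

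As a self-contained check, I would also verify it through the data-processing inequality for KL under the marginalization map $(\dv,\dv')\mapsto\dv$. The $\dv$-marginal of $p_0(\dv)p_{\theta,t}(\dv')$ is $p_0$. The $\dv$-marginal of $K(\dv\mid\dv')p_{\theta,t}(\dv')$ is $Kp_{\theta,t}:=\int K(\dv\mid\dv')p_{\theta,t}(\dv')\,d\dv'$. Marginalization cannot increase KL, which gives $\KL(p_0\|Kp_{\theta,t})\le\KL(\gamma_t\|Kp_{\theta,t}(\dv'))$.

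Equivalently, the chain rule decomposes the joint KL as
\begin{equation*}
\KL(\gamma_t\,\|\,K p_{\theta,t})=\KL(p_0\,\|\,Kp_{\theta,t})+\EE_{\dv\sim p_0}\KL\!\rbr{p_{\theta,t}(\cdot)\,\|\,\pi(\cdot\mid\dv)},
\end{equation*}
where $\pi(\dv'\mid\dv)$ is the posterior of $\dv'$ under the model joint. The second term is nonnegative, which again yields the inequality.

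The consistency claim is then immediate. If the joint surrogate is zero, the inequality forces $0\le\KL(p_0\|Kp_{\theta,t})\le0$, so $\KL(p_0\|Kp_{\theta,t})=0$. By Gibbs' inequality, this gives $Kp_{\theta,t}=p_0$ (almost everywhere, and exactly on the finite space $\Vcal^L$). Alternatively, zero joint KL means $p_0(\dv)p_{\theta,t}(\dv')=K(\dv\mid\dv')p_{\theta,t}(\dv')$ $\gamma_t$-a.e. Integrating out $\dv'$ gives the claim directly.

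The only delicate point is measure-theoretic: supports and absolute continuity, i.e.\ the case where $K p_{\theta,t}$ vanishes where $p_0$ does not. In that case both sides are $+\infty$, or the left side is, and the inequality still holds with the usual conventions. Since $\Vcal^L$ is finite, all integrals are finite sums and I expect no real obstacle here.
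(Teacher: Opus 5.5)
Your proof is correct, but it is organized differently from the paper's. The paper proves only the zero-KL case, and does so exactly as in your ``alternatively'' sentence: vanishing joint KL forces $p_0(\dv)p_{\theta,t}(\dv')=K(\dv\mid\dv')p_{\theta,t}(\dv')$, and integrating out $\dv'$ gives $Kp_{\theta,t}=p_0$. The ``more generally'' inequality is not argued in that proof at all; it is implicitly inherited from Proposition~\ref{prop:joint_surrogate}.

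You instead establish the inequality first and recover consistency as its equality case through Gibbs' inequality. Your three routes to the inequality are Proposition~\ref{prop:joint_surrogate} (the product is just the case $q(\dv'\mid\dv)=p_{\theta,t}(\dv')$ of its Jensen argument), data processing, and the chain rule. The paper's route is the shortest path to the exact claim. Yours covers both halves of the statement.

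The chain-rule identity also gives more than a bound. It shows the slack is exactly $\EE_{\dv\sim p_0}\KL(p_{\theta,t}\,\|\,\pi(\cdot\mid\dv))$. So the product surrogate vanishes only if $K(\cdot\mid\dv')=p_0$ for every $\dv'$ in the support of $p_{\theta,t}$. In other words, the hypothesis of the exact claim is met only by a kernel that ignores its input draft.

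There is one slip in your last paragraph. Your hedge that possibly just the left side is $+\infty$ would contradict the inequality, and that case cannot occur. If $Kp_{\theta,t}(\dv)=0<p_0(\dv)$, then $K(\dv\mid\dv')p_{\theta,t}(\dv')=0$ for all $\dv'$, while $\gamma_t(\dv,\dv')>0$ whenever $p_{\theta,t}(\dv')>0$. Hence the right side is $+\infty$ as well. The only asymmetric case is a finite left side with an infinite right side, which is harmless.
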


\begin{proof}
If the joint surrogate equals zero, then the two joint distributions are equal almost everywhere:
\begin{equation*}
    p_0\rbr{\dv}p_{\theta,t}\rbr{\dv'}
    =
    K\rbr{\dv\mid\dv'}p_{\theta,t}\rbr{\dv'}.
\end{equation*}
Integrating both sides over $\dv'$ gives
\begin{align*}
    {\int}
    p_0\rbr{\dv}p_{\theta,t}\rbr{\dv'}\,d\dv'
    &=
    {\int}
    K\rbr{\dv\mid\dv'}p_{\theta,t}\rbr{\dv'}\,d\dv' .
\end{align*}
Since $p_{\theta,t}$ is normalized, the left-hand side is $p_0\rbr{\dv}$. 
Therefore,
\begin{equation*}
    p_0\rbr{\dv}
    =
    {\int}
    K\rbr{\dv\mid\dv'}p_{\theta,t}\rbr{\dv'}\,d\dv',
\end{equation*}
which proves exact marginal consistency.
\end{proof}

\begin{proposition}[Ideal monotonic refinement]
\label{prop:ideal_monotonicity}
Let $\mathcal{K}$ be a refinement family that contains the identity kernel
\begin{equation*}
    I\rbr{\dv\mid\dv'}=\mathbf{1}\rbr{\dv=\dv'}.
\end{equation*}
Given the current draft marginal $p_{\theta,t}$, define the ideal refinement kernel as
\begin{equation*}
    K_t^\star
    =
    \arg\min_{K\in\mathcal{K}}
    \KL\!\rbr{
        p_0\rbr{\dv}
        \,\|\,
        {\int}
        K\rbr{\dv\mid\dv'}
        p_{\theta,t}\rbr{\dv'}
        \,d\dv'
    }.
\end{equation*}
Let the corresponding ideal refined marginal be
\begin{equation*}
    p_{\theta,t-1}^{\star}\rbr{\dv}
    =
    {\int}
    K_t^\star\rbr{\dv\mid\dv'}
    p_{\theta,t}\rbr{\dv'}
    \,d\dv'.
\end{equation*}
Then the ideal refinement step cannot increase the marginal KL to the target distribution:
\begin{equation*}
    \KL\!\rbr{
        p_0\rbr{\dv}
        \,\|\,
        p_{\theta,t-1}^{\star}\rbr{\dv}
    }
    \le
    \KL\!\rbr{
        p_0\rbr{\dv}
        \,\|\,
        p_{\theta,t}\rbr{\dv}
    }.
\end{equation*}
\end{proposition}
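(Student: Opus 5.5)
The argument is a direct optimality comparison: the identity kernel is a feasible competitor in the minimization that defines $K_t^\star$. The plan is to evaluate the objective at $K=I$, note that it reproduces the current marginal, and conclude by the definition of the minimizer.

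First, we compute the marginal induced by the identity kernel:
\begin{equation*}
    {\int} I\rbr{\dv\mid\dv'}p_{\theta,t}\rbr{\dv'}\,d\dv' = {\int}\mathbf{1}\rbr{\dv=\dv'}p_{\theta,t}\rbr{\dv'}\,d\dv' = p_{\theta,t}\rbr{\dv}.
\end{equation*}
For discrete sequences $\dv\in\Vcal^L$ this is literally a sum, with $\mathbf{1}$ acting as a Kronecker delta. In a continuous reading one would replace the indicator by a Dirac measure, and the identity still holds. Either way, plugging $K=I$ into the objective gives exactly $\KL\rbr{p_0\,\|\,p_{\theta,t}}$.

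Second, since $I\in\mathcal{K}$ and $K_t^\star$ minimizes $K\mapsto\KL\rbr{p_0\,\|\,\int K\,p_{\theta,t}}$ over $\mathcal{K}$, we have
\begin{equation*}
    \KL\!\rbr{p_0\,\|\,p^\star_{\theta,t-1}} \le \KL\!\rbr{p_0\,\|\,{\int} I\,p_{\theta,t}} = \KL\!\rbr{p_0\,\|\,p_{\theta,t}},
\end{equation*}
which is the claim. The inequality also holds when the right-hand side is $+\infty$.

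The only delicate point is well-posedness of the $\arg\min$: the minimum might not be attained. I would handle this by remarking that the statement presupposes that $K_t^\star$ exists. Failing that, the same argument applies verbatim to any $\epsilon$-minimizer, giving a bound of $\KL\rbr{p_0\,\|\,p_{\theta,t}}+\epsilon$, and to the infimum itself. No earlier result is needed beyond the definitions, although one can note its consistency with Proposition~\ref{prop:marginal_consistency_independent}: when the family is rich enough to drive the objective to zero, the inequality is strict unless $p_{\theta,t}=p_0$ already.
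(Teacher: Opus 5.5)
Your proof is correct and matches the paper's argument exactly: the identity kernel is a feasible member of $\mathcal{K}$ that reproduces $p_{\theta,t}$, so the minimal objective value is at most $\KL(p_0\,\|\,p_{\theta,t})$. Your remarks on attainment of the $\arg\min$ and on the $+\infty$ case are sound extras that the paper leaves implicit.
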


\begin{proof}
By definition of $K_t^\star$, we have
\begin{align*}
    \KL\!\rbr{
        p_0\rbr{\dv}
        \,\|\,
        p_{\theta,t-1}^{\star}\rbr{\dv}
    } \nonumber 
    =
    \min_{K\in\mathcal{K}}
    \KL\!\rbr{
        p_0\rbr{\dv}
        \,\|\,
        {\int}
        K\rbr{\dv\mid\dv'}
        p_{\theta,t}\rbr{\dv'}
        \,d\dv'
    }.
\end{align*}
Since $\mathcal{K}$ contains the identity kernel $I\rbr{\dv\mid\dv'}=\mathbf{1}\rbr{\dv=\dv'}$, the minimization over $\mathcal{K}$ includes the choice $K=I$. 
Therefore,
\begin{align*}
    \min_{K\in\mathcal{K}}
    \KL\!\rbr{
        p_0\rbr{\dv}
        \,\|\,
        {\int}
        K\rbr{\dv\mid\dv'}
        p_{\theta,t}\rbr{\dv'}
        \,d\dv'
    } \nonumber \\
   \le
    \KL\!\rbr{
        p_0\rbr{\dv}
        \,\|\,
        {\int}
        I\rbr{\dv\mid\dv'}
        p_{\theta,t}\rbr{\dv'}
        \,d\dv'
    }.
\end{align*}
The identity kernel preserves the current marginal:
\begin{equation*}
    {\int}
    I\rbr{\dv\mid\dv'}
    p_{\theta,t}\rbr{\dv'}
    \,d\dv'
    =
    p_{\theta,t}\rbr{\dv}.
\end{equation*}
Thus,
\begin{equation*}
    \KL\!\rbr{
        p_0\rbr{\dv}
        \,\|\,
        p_{\theta,t-1}^{\star}\rbr{\dv}
    }
    \le
    \KL\!\rbr{
        p_0\rbr{\dv}
        \,\|\,
        p_{\theta,t}\rbr{\dv}
    },
\end{equation*}
which proves the claim.
\end{proof}

\subsection{Interpretation of Empirical Best-of-$N$ Selection}
\label{app:log_ratio}

Let the proposal marginal induced by the current draft distribution and self-refinement proposer be
\begin{equation*}
    p_{\theta,t}^{\mathrm{SR}}\rbr{\dv}
    =
    {\int}
    K_{\theta}^{\mathrm{SR}}\rbr{\dv\mid\dv'}
    p_{\theta,t}\rbr{\dv'}
    \,d\dv'.
    \label{eq:app_sr_marginal}
\end{equation*}
The finite-candidate ranking objective contrasts positives $\dv_1\sim p_0$ with proposal samples $\dv_j\sim p_{\theta,t}^{\mathrm{SR}}$. 
At the population optimum, the score logit satisfies
\begin{equation*}
    \tau\scorer^\star\rbr{\dv}
    =
    \log p_0\rbr{\dv}
    -
    \log p_{\theta,t}^{\mathrm{SR}}\rbr{\dv}
    +
    C,
    \label{eq:app_marginal_log_ratio}
\end{equation*}
where $C$ is independent of $\dv$. 
Thus, the scorer estimates how much more likely a sequence is under the target distribution than under the proposal marginal.

This also motivates the empirical selection rule used in implementation. 
For candidates generated from a particular draft $\dv'$, ranking by
\begin{equation*}
    \log K_{\theta}^{\mathrm{SR}}\rbr{\dv\mid\dv'}
    +
    \tau\scorer\rbr{\dv}
\end{equation*}
combines the draft-conditioned proposal likelihood with a marginal correction toward the target distribution. 
This preserves the local preference of the proposer while using the scorer to favor globally more data-like candidates.
\section{Experiment Details}

\subsection{Baseline Models}

We compare \method with representative diffusion language models in the sub-8B regime as follows: 

\textbf{DiffuLLaMA}~\citep{DiffuLLaMA} is a 7B-parameter diffusion language model adapted from the autoregressive LLaMA2 backbone. 
It bridges autoregressive and diffusion-style training by modifying the attention mechanism and mitigating the causal-masking bias inherited from AR pretraining.

\textbf{LLaDA-MoE-7B-A1B-Base}~\citep{llada-moe} is a sparse mixture-of-experts diffusion language model with 7B total parameters and approximately 1B active parameters. 
It combines masked diffusion modeling with MoE routing to improve model capacity while keeping the inference-time active-parameter budget relatively small.

\textbf{LLaDA-8B-Base}~\citep{LLaDA} is a large masked diffusion language model that uses a forward masking process and a learned reverse generation process. 
It is parameterized by a Transformer denoiser that predicts masked tokens and is trained by optimizing a likelihood-based diffusion objective.

\textbf{Dream-7B-Base}~\citep{dream7b} is a 7B-parameter diffusion language model that refines sequences through parallel iterative denoising. 
It is initialized from an autoregressive language model and further trained with diffusion-style objectives and context-adaptive token-level noise scheduling.

\textbf{TiDAR-8B}~\citep{tidar} is an 8B-parameter hybrid model that combines diffusion-based draft generation with autoregressive-style verification or correction. 
It uses structured attention mechanisms to balance drafting and verification, and we report both its trusting-AR and trusting-diffusion variants when available. Since this model has not been open-sourced yet, all evaluation results are sourced from the original paper~\citep{tidar}.

\textbf{BlockDiff-4B}~\citep{block_diffusion} follows the block diffusion language modeling paradigm. 
It generates text autoregressively at the block level, while modeling the conditional distribution within each block using discrete diffusion. 
We train the BlockDiff-4B baseline using the same training setups as \method, following the blockwise masked diffusion formulation in~\citep{block_diffusion}.

\subsection{Evaluation Benchmarks}

We evaluate all models on six benchmarks covering general knowledge, mathematical reasoning, and code generation. 
Unless otherwise specified, we follow the shot settings described in the main text.

\subsubsection{General Knowledge}

\textbf{MMLU}~\citep{mmlu} is a multitask multiple-choice benchmark covering 57 subjects, including mathematics, history, computer science, law, medicine, and social sciences. 
In our evaluation, we use the standard test split with 14{,}042 questions.

\textbf{GPQA-Diamond}~\citep{gpqa} is the most challenging 198-question subset of GPQA, consisting of expert-written multiple-choice questions in biology, physics, and chemistry. 
Following our evaluation loader, we reserve the first 5 examples as in-context exemplars and evaluate on the remaining 193 questions.

\subsubsection{Mathematical Reasoning}

\textbf{GSM8K}~\citep{gsm8k} contains grade-school-level math word problems that require multi-step arithmetic reasoning. 
We evaluate on the full HuggingFace \texttt{openai/gsm8k} \texttt{main/test} split, which contains 1{,}319 questions.

\textbf{MATH-500}~\citep{math500} is a commonly used 500-problem subset of the MATH benchmark, covering competition-style mathematical reasoning across algebra, geometry, number theory, counting and probability, and related topics.

\subsubsection{Code Generation}

\textbf{HumanEval}~\citep{humaneval} evaluates functional code generation using 164 hand-written Python programming problems. 
Each problem includes a function signature, docstring, reference solution, and unit tests, and model outputs are evaluated by execution-based pass rates.

\textbf{HumanEval+}~\citep{humaneval_plus} extends HumanEval with substantially more comprehensive test cases and corrected evaluation cases. 
It provides a stricter execution-based evaluation protocol for assessing the functional correctness of generated code.
\section{Implementation Details of \method}
\label{app:implementation}

\subsection{Attention Mask Design}
\begin{figure}[!ht]
    \centering
    \includegraphics[width=\linewidth]{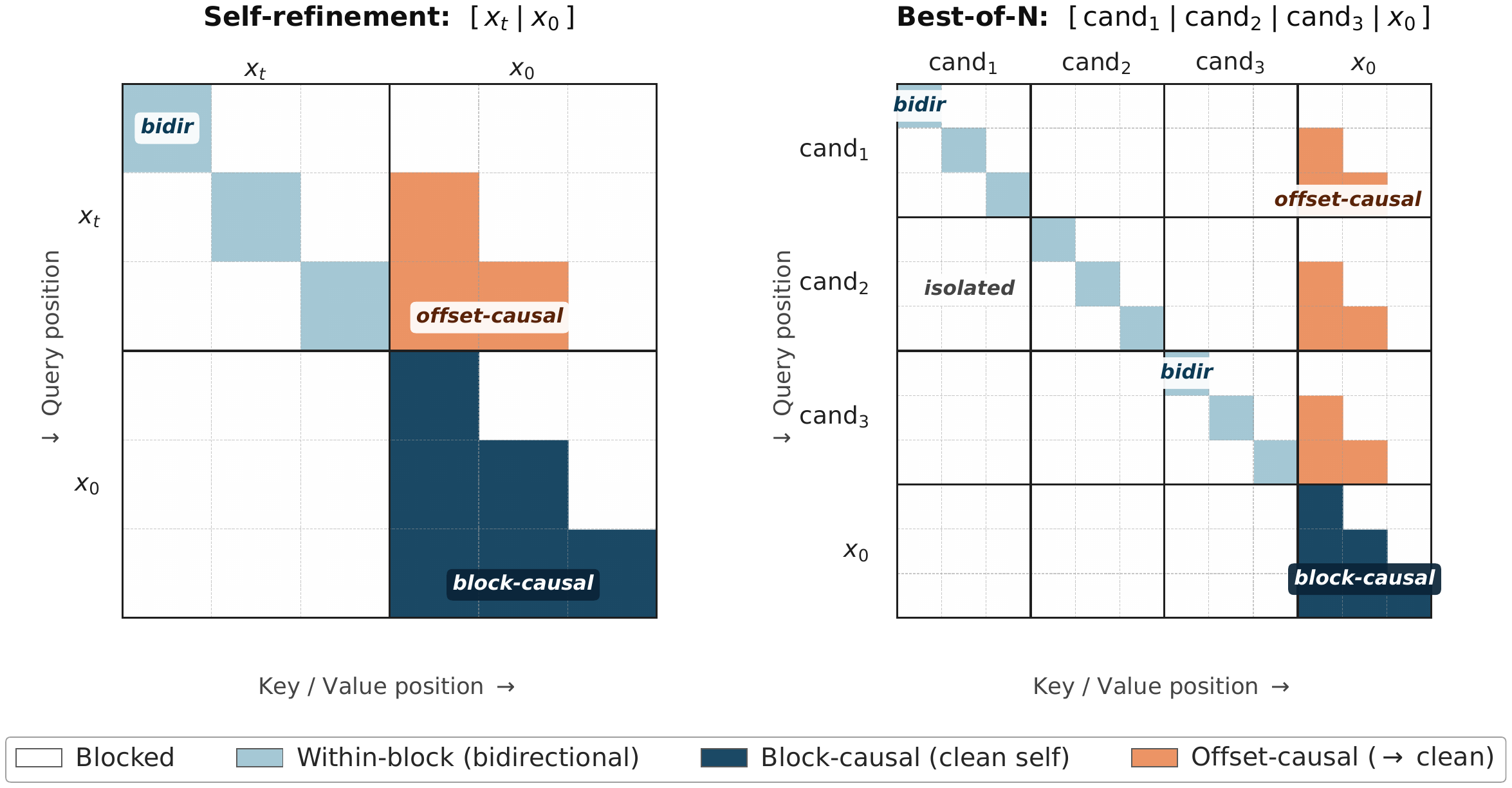}
    \caption{Attention mask design for \method's Self-refinement and Best-of-N forwards.}
    \label{fig:attn-mask-design}
\end{figure}

Figure~\ref{fig:attn-mask-design} illustrates the attention masks used for self-refinement and Best-of-$N$ refinement. We adopt a similar mask design for self-refinement following ~\citep{block_diffusion, sdar, llada20}.
In self-refinement, the input is organized as $[\dv_t \mid \dv_0]$. 
The refinement segment $\dv_t$ uses bidirectional attention within each block, allowing all positions in the current block to be jointly denoised, while the clean segment $\dv_0$ uses block-causal attention. 
In addition, refinement block $j$ can attend only to clean blocks $0,\ldots,j-1$, ensuring that the model conditions on committed prefix tokens without seeing the clean version of the block being refined. 
For Best-of-$N$ refinement, the input is organized as $[\mathrm{cand}_1\mid\cdots\mid\mathrm{cand}_N\mid\dv_0]$ and follows the same within-block bidirectional and offset-causal rules, but different candidate segments are mutually isolated. 
This isolation ensures that each candidate is scored only from its own tokens and the shared clean prefix, preventing the scorer from exploiting interactions among candidates. 
The bidirectional within-block attention enables global refinement and scoring within each block, while the block-causal structure preserves efficient left-to-right block generation.

\begin{wrapfigure}{r}{0.4\textwidth}
    \vspace{-4.0em}
    \centering
    \includegraphics[width=0.98\linewidth]{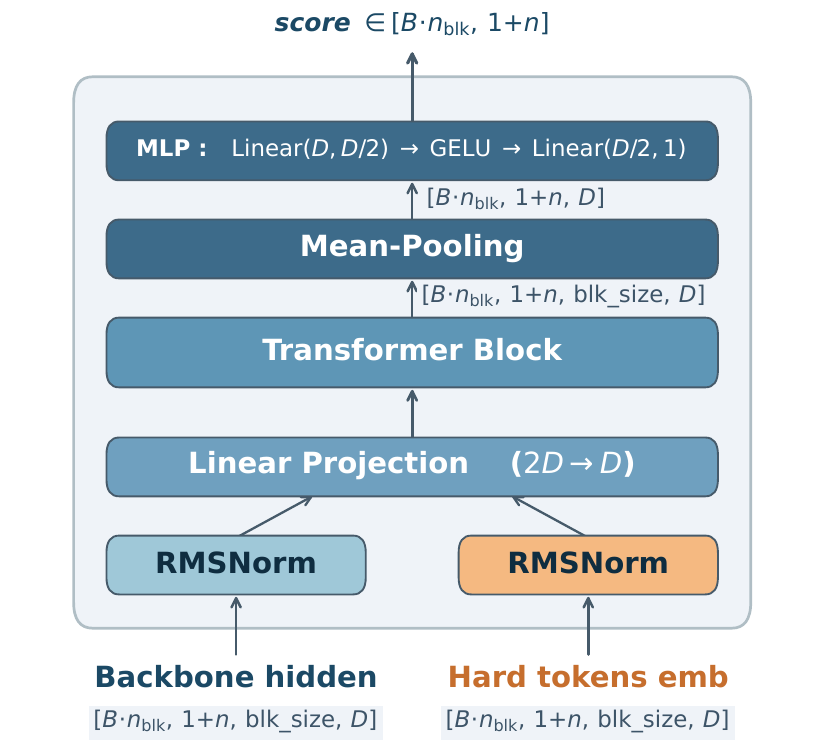}
    \caption{
    Best-of-$N$ scorer head with soft--hard fusion.
    }
    \label{fig:scorer-arch}
    \vspace{-1.0em}
\end{wrapfigure}

\subsection{Best-of-$N$ Scorer Architecture}
\label{app:scorer_arch}

The Best-of-$N$ variant of \method uses a lightweight scorer head to rank block-level candidate refinements. 
As shown in Fig.~\ref{fig:scorer-arch}, the scorer shares the self-refinement backbone, which is mostly frozen during scorer training, and predicts one scalar score for each candidate slot. 
Given $n$ candidate refinements and one clean positive slot, the scorer processes $(1+n)$ slots per block and is trained with the Best-of-$N$ objective, which treats the clean $x_0$ slot as the positive.

For each candidate, the scorer uses two complementary input channels. 
The soft channel is the frozen backbone hidden state $H$, which captures contextual compatibility under the blockwise attention pattern. 
The hard channel is the embedding of the candidate's discrete argmax tokens which preserves token-level identity information that may be weakened by the soft embedding. 
Inspired by the feature fusion adopted in multi-token prediction~(MTP) module in DeepSeek-V3~\citep{deepseek-v3}, the two channels are separately normalized and fused through a linear projection:
\[
    Z
    =
    W_{\mathrm{fuse}}
    \left[
        \mathrm{RMSNorm}(H)
        \,;\,
        \mathrm{RMSNorm}\rbr{E(\arg\max c)}
    \right].
\]
The fused representation is processed by a lightweight Qwen3-style Transformer block~\citep{qwen3}, masked mean pooling, and a two-layer MLP:
\begin{align*}
    s(c)
    &=
    \mathrm{MLP}
    \left(
        \mathrm{MeanPool}
        \rbr{
            \mathrm{TransformerBlock}(Z)
        }
    \right),
    \\
    \mathrm{MLP}
    &=
    \mathrm{Linear}(D,D/2)
    \rightarrow
    \mathrm{GELU}
    \rightarrow
    \mathrm{Linear}(D/2,1).
\end{align*}
The resulting $(1+n)$ scores are normalized with a softmax under the Best-of-$N$ objective. 
Because the scorer reuses the frozen proposer backbone and only adds a small head over pooled block representations, Best-of-$N$ selection remains efficient during both training and inference.

\subsection{Soft Prediction Embedding}
To represent sampler-induced drafts during recursive refinement, we adopt a soft prediction embedding from \citep{dmax}, rather than feeding back only hard argmax tokens. 
Let $h_i^{(t)}$ be the logits at position $i$ after refinement step $t$, and let
\begin{equation*}
    \hat{x}_i^{(t)}
    =
    \arg\max_{v\in\Vcal} h_{i,v}^{(t)}
\end{equation*}
denote the selected token. 
We construct the next-step input embedding as
\begin{equation}
    e_i^{(t+1)}
    =
    \alpha_i^{(t)}
    E\rbr{\hat{x}_i^{(t)}}
    +
    \rbr{1-\alpha_i^{(t)}}
    E\rbr{\blanktoken},
    \qquad
    \alpha_i^{(t)}
    =
    \mathrm{softmax}\rbr{h_i^{(t)}}_{\hat{x}_i^{(t)}},
    \label{eq:soft_prediction_embedding}
\end{equation}
where $E$ is the token embedding table and $\alpha_i^{(t)}$ is the model probability of the selected token. 
Thus, high-confidence selected tokens are represented close to their hard token embeddings, while uncertain predictions remain close to the mask embedding and can be further refined. 
This confidence-weighted interpolation is related to recent smoothing and trace-based decoding techniques for diffusion language models~\citep{smoothie,credit-decoding}. 
Unlike methods that apply such smoothing only during inference, \method uses the same soft prediction embedding during both training and inference, reducing the train--inference mismatch of intermediate drafts and improving refinement stability.

\begin{algorithm}[!htbp]
    \SetAlgoLined
    \DontPrintSemicolon
    \SetNoFillComment
    \SetNlSkip{0.4em}
    \LinesNotNumbered
    \linespread{.86}\selectfont
    \caption{\method Training}
    \label{alg:training}
    \SetKwInput{KwIn}{Input}
    \SetKwInput{KwOut}{Output}

    \KwIn{
    Training corpus $\Dcal$; refinement backbone $K_{\theta}$; optional scorer $\scorer$; \\
    maximum refinements $K_{\max}$; Best-of-$N$ width $n$; blockwise attention mask $\Mcal_{\mathrm{block}}$ from Fig.~\ref{fig:attn-mask-design}.
    }
    \KwOut{Trained self-refinement model $K_{\theta}$ and optional Best-of-$N$ scorer $\scorer$.}

    \tcc{Stage I: train sampler-induced self-refinement}
    Initialize $K_{\theta}$ from the base language model\;
    \For{training step}{
        Sample clean sequence $\dv_0\sim\Dcal$ and sample $K\le K_{\max}$\;
        Initialize draft $\dv_K\leftarrow[\blanktoken]$ for the current block\;
        \For{$r=K,\ldots,1$}{
            Form training input $\ub_r\leftarrow[\,\dv_r;\dv_0\,]$\;
            Run $K_{\theta}$ on $\ub_r$ with blockwise attention mask $\Mcal_{\mathrm{block}}$\;
            \eIf{$r>1$}{
                Update draft $\dv_{r-1}$ from the prediction with no gradient\;
                $\dv_{r-1}\leftarrow\mathrm{stopgrad}\rbr{\dv_{r-1}}$\;
            }{
                Update $\theta$ by minimizing the self-refinement loss in Eq.~\eqref{eq:self_refine_loss}\;
            }
        }
    }

\tcc{Stage II: train Best-of-$N$ scorer, if used}
\If{Best-of-$N$ refinement is enabled}{
    Freeze the refinement backbone $K_{\theta}$\;
    \For{training step}{
        Sample clean positive $\dv^{+}\sim\Dcal$ and sample $K\le K_{\max}$\;
        Generate $n$ candidate refinements $\{\dv_i^{-}\}_{i=1}^{n}$ by running the self-refinement sampler with frozen $K_{\theta}$\;
        Concatenate positive and candidates into one BoN input 
        $\ub_{\mathrm{bon}}\leftarrow[\,\dv^{+};\dv_1^{-};\cdots;\dv_n^{-}\,]$\;
        Run frozen $K_{\theta}$ on $\ub_{\mathrm{bon}}$ with BoN attention mask $\Mcal_{\mathrm{BoN}}$, which isolates candidate slots\;
        Score the $(1+n)$ slots with $\scorer$\;
        Update $\phi$ with the Best-of-$N$ objective in Eq.~\eqref{eq:bon_loss}\;
    }
}
\end{algorithm}

\subsection{Pseudocode for Training and Inference}
\label{app:pseudocode}

We provide pseudocode for the main training and decoding procedures of \method. 
Algorithm~\ref{alg:training} summarizes the two-stage training pipeline, including sampler-induced self-refinement training and optional Best-of-$N$ scorer training. 
Algorithm~\ref{alg:self-refine-infer} describes self-refinement decoding for a single block, where the model iteratively updates a draft and commits high-confidence tokens. 
Algorithm~\ref{alg:bon-one-block} extends this procedure with Best-of-$N$ candidate selection, and Algorithm~\ref{alg:beam-sampling} details the marginal-cost beam sampler used to construct diverse candidate blocks efficiently.

\begin{algorithm}[!htbp]
    \SetAlgoLined
    \DontPrintSemicolon
    \SetNoFillComment
    \LinesNotNumbered
    \linespread{.86}\selectfont
    \caption{Self-Refinement Decoding for One Block}
    \label{alg:self-refine-infer}
    \SetKwInput{KwIn}{Input}
    \SetKwInput{KwOut}{Output}

    \KwIn{Refinement backbone $K_{\theta}$; committed prefix $x_{\mathrm{pre}}$; block size $B$; refinement budget $K$; confidence threshold $\eta$.}
    \KwOut{Decoded block $\hat c\in\Vcal^B$.}

    Initialize $\hat c \leftarrow [\blanktoken]^B$ and commitment mask $m\leftarrow[0]^B$\;
    Initialize $p \leftarrow \mathrm{Uniform}(\Vcal)^B$\;

    \For{$k=1,\ldots,K$}{
        \eIf{$k=1$}{
            Form input $\ub\leftarrow[\,\hat c;x_{\mathrm{pre}}\,]$ using hard blank tokens\;
        }{
            Build soft input embeddings $e\leftarrow\textsc{SoftEmbed}(\hat c,p,m)$ using Eq.~\eqref{eq:soft_prediction_embedding}\;
            Form input $\ub\leftarrow[\,e;x_{\mathrm{pre}}\,]$\;
        }

        Compute $p \leftarrow \mathrm{softmax}\rbr{K_{\theta}(\ub;\Mcal_{\mathrm{block}})}_{1:B}$\;
        Set $\tilde c_i \leftarrow \arg\max_{v\in\Vcal} p_{i,v}$ and $\alpha_i\leftarrow p_{i,\tilde c_i}$ for all $i$\;

        Reset $m\leftarrow[0]^B$\;
        \tcc{Early commitment.}
        \For{$i=1,\ldots,B$}{
            \eIf{$\alpha_i\ge\eta$}{
                $\hat c_i\leftarrow\tilde c_i$ and $m_i\leftarrow1$\;
            }{
                \textbf{break}\;
            }
        }

        \tcc{Early stop if all positions in the block are committed.}
        \If{$\forall i,\ m_i=1$}{
            \textbf{break}\;
        }
    }

    \tcc{If the iteration budget is exhausted, force-commit remaining positions.}
    \For{each $i$ with $m_i=0$}{
        $\hat c_i\leftarrow \arg\max_{v\in\Vcal}p_{i,v}$\;
    }

    \Return{$\hat c$}\;
\end{algorithm}

\begin{algorithm}[!htpb]
    \SetAlgoLined
    \DontPrintSemicolon
    \SetNoFillComment
    \LinesNotNumbered
    \linespread{.86}\selectfont
    \caption{Best-of-$N$ Refinement Decoding for One Block}
    \label{alg:bon-one-block}
    \SetKwInput{KwIn}{Input}
    \SetKwInput{KwOut}{Output}

    \KwIn{
    Refinement backbone $K_{\theta}$; scorer $\scorer$; committed prefix $x_{\mathrm{pre}}$; \\
    block size $B$; refinement budget $K$; width $n$; commitment threshold $\eta$; scorer weight $\tau$.
    }
    \KwOut{Decoded block $\hat c\in\Vcal^B$.}

    Initialize draft $\hat c\leftarrow[\blanktoken]^B$, commitment mask $m\leftarrow[0]^B$, and $p\leftarrow\mathrm{Uniform}(\Vcal)^B$\;

    \For{$k=1,\ldots,K$}{
        \tcc{Generate $n$ candidate drafts from the current state.}
        Generate candidate drafts $\{\hat c_i\}_{i=1}^{n}$ from $\hat c$ using marginal-cost beam sampling or stochastic candidate sampling\;
        Compute proposal scores $\ell_i\leftarrow\log q_{\theta}\rbr{\hat c_i\mid \hat c,x_{\mathrm{pre}}}$ for $i=1,\ldots,n$\;

        \tcc{One packed BoN forward gives both refinement predictions and scorer values.}
        Form BoN input 
        $\ub_{\mathrm{bon}}\leftarrow[\,\hat c_1;\hat c_2;\cdots;\hat c_n;x_{\mathrm{pre}}\,]$\;
        Run frozen $K_{\theta}$ on $\ub_{\mathrm{bon}}$ with BoN attention mask $\Mcal_{\mathrm{BoN}}$\;
        \tcp{$\Mcal_{\mathrm{BoN}}$ isolates candidate slots while allowing each candidate to attend to the committed prefix.}
        Obtain backbone token probabilities $p_i$ and scorer values $s_i\leftarrow\scorer(\hat c_i,x_{\mathrm{pre}})$ for each candidate slot\;

        \tcc{Select the best candidate and continue refinement from it.}
        $i^\star\leftarrow
        \arg\max_{i\in\{1,\ldots,n\}}
        \rbr{\ell_i+\tau s_i}$\;
        $\hat c\leftarrow \hat c_{i^\star}$, \quad $p\leftarrow p_{i^\star}$\;

        \tcc{Early commitment under the selected candidate.}
        Set $\tilde c_j\leftarrow\arg\max_{v\in\Vcal}p_{j,v}$ and $\kappa_j\leftarrow p_{j,\tilde c_j}$ for all $j$\;
        Reset $m\leftarrow[0]^B$\;
        \For{$j=1,\ldots,B$}{
            \eIf{$\kappa_j\ge\eta$}{
                $\hat c_j\leftarrow\tilde c_j$ and $m_j\leftarrow1$\;
            }{
                \textbf{break}\;
            }
        }

        \tcc{Early stop if all positions are committed.}
        \If{$\forall j,\ m_j=1$}{
            \textbf{break}\;
        }
    }

    \tcc{If the iteration budget is exhausted, force-commit remaining positions.}
    \For{each $j$ with $m_j=0$}{
        $\hat c_j\leftarrow \arg\max_{v\in\Vcal}p_{j,v}$\;
    }

    \Return{$\hat c$}\;
\end{algorithm}

\begin{algorithm}[!htbp]
    \SetAlgoLined
    \DontPrintSemicolon
    \SetNoFillComment
    \LinesNotNumbered
    \linespread{.86}\selectfont
    \caption{Marginal-Cost Beam Candidate Sampling}
    \label{alg:beam-sampling}
    \SetKwInput{KwIn}{Input}
    \SetKwInput{KwOut}{Output}

    \KwIn{Per-position probabilities $p\in[0,1]^{B\times|\Vcal|}$; block size $B$; number of candidates $n$; maximum swap size $R$.}
    \KwOut{Ordered candidate blocks $(\hat c_1,\ldots,\hat c_n)$ and proposal scores $(\ell_1,\ldots,\ell_n)$.}

    \For{$j=1,\ldots,B$}{
        $v_1[j]\leftarrow\arg\max_{v\in\Vcal}p_{j,v}$\;
        $v_2[j]\leftarrow\arg\max_{v\in\Vcal\setminus\{v_1[j]\}}p_{j,v}$\;
        $\Delta[j]\leftarrow\log p_{j,v_1[j]}-\log p_{j,v_2[j]}$\;
    }

    $\Bcal\leftarrow\{(\emptyset,0)\}$\;

    \For{$r=1,\ldots,R$}{
        \For{each subset $S\subseteq\{1,\ldots,B\}$ with $|S|=r$}{
            $\Bcal\leftarrow\Bcal\cup\{(S,\sum_{j\in S}\Delta[j])\}$\;
        }
    }

    Sort $\Bcal$ by marginal cost and keep the $n$ cheapest subsets\;

    \For{$i=1,\ldots,n$}{
        $(S_i,\cdot)\leftarrow\Bcal[i]$\;
        $\hat c_i\leftarrow v_1$\;
        \For{each $j\in S_i$}{
            $\hat c_i[j]\leftarrow v_2[j]$\;
        }
        $\ell_i\leftarrow\sum_{j=1}^{B}\log p_{j,\hat c_i[j]}$\;
    }

    \Return{$(\hat c_1,\ldots,\hat c_n),(\ell_1,\ldots,\ell_n)$}\;
\end{algorithm}

\section{Training Details of \method}
\label{app:training_details}

\begin{table}[!htbp]
\centering
\caption{
Training and optimization details for \method. 
}
\label{tab:training_details}
\setlength{\tabcolsep}{5pt}
\renewcommand{\arraystretch}{1.12}
\begin{tabular}{ll}
\toprule
\textbf{Configuration} & \textbf{Value} \\
\midrule
Base model & Qwen3-4B-Base \\
Block size $B$ & 4 \\
Sequence length $S$ & 4096 \\
Mixed precision & bf16 \\
Gradient checkpointing & enabled \\
\midrule
Optimizer & AdamW \\
Learning rate & $5\times10^{-5}$, constant after warm-up \\
Warm-up & 250 steps, linear \\
Weight decay & 0.01 \\
Adam betas & $(0.9, 0.999)$ \\
Max gradient norm & 1.0 \\
EMA & disabled \\
Backbone LR scale & 1.0 \\
\texttt{set\_to\_none} & true \\
Random seed & 42 \\
\bottomrule
\end{tabular}
\end{table}

\subsection{Dataset Preparation}
We construct a curated 10B-token continued-pretraining corpus from four public data streams. 
The largest component is \textbf{nemotron\_math}, drawn from \texttt{nvidia/Nemotron-CC-Math-v1} with the \texttt{4plus\_MIND} configuration, using the \texttt{train} split and \texttt{text} field. 
This stream corresponds to the highest-quality MIND-classified bucket in NVIDIA's Nemotron-CC-Math corpus and serves as the primary source for mathematics and reasoning~\citep {nemotron-cc-math}. 
The \textbf{nemotron\_code} stream is drawn from \texttt{nvidia/Nemotron-Pretraining-Specialized-v1.1} with the \texttt{Nemotron-Pretraining-Code-Concepts} configuration, using the \texttt{train} split and \texttt{text} field. 
This stream provides algorithmic reasoning and code concept documents~\citep{nemotron-code}. 
The \textbf{nemotron\_logic} stream is also drawn from \texttt{nvidia/Nemotron-Pretraining-Specialized-v1.1}, using the \texttt{Nemotron-Pretraining-Formal-Logic} configuration, and provides formal-reasoning and logic data. 
Finally, the \textbf{fineweb\_edu} stream is drawn from \texttt{HuggingFaceFW/fineweb-edu} with the \texttt{sample-10BT} configuration, using the \texttt{train} split and \texttt{text} field. 
FineWeb-Edu is an educational subset of FineWeb filtered from large-scale web data and serves as a general educational stabilizer~\citep{fineweb-edu}.

The final mixture contains NVIDIA-Nemotron math~\citep{nemotron-cc-math}~(40\%, approximately 4.0B tokens), NVIDIA-Nemotron code-concepts~\citep{nemotron-code}~(36\%, approximately 3.6B tokens), FineWeb-Edu~\citep{fineweb-edu}~(22\%, approximately 2.2B tokens), and NVIDIA-Nemotron formal-logic~\citep{nemotron-code}~(2\%, approximately 0.2B tokens). 
The resulting corpus is intentionally STEM- and reasoning-oriented, with 78\% of the data coming from mathematics, code, and formal logic sources, aligning with our downstream focus on mathematical reasoning and code generation. 
We additionally decontaminate the training corpus against all evaluation benchmarks used in this work to reduce the risk of test-set leakage. 
All documents are tokenized with the Qwen3-4B-Base tokenizer, which has a vocabulary size of 151{,}936. 

We pack tokenized examples into fixed-length sequences of 4096 using neat-packing, where multiple shorter documents may be concatenated into a single training sequence. 
To prevent information leakage between unrelated documents during training, we maintain a per-token document identifier and apply document-segmented attention masks, ensuring that tokens cannot attend across document boundaries. 
The prepared corpus is stored as int32 token IDs and used for continued pretraining of the \method backbone.

\subsection{Training Schedule and Optimizer Setup}

We train \method in two stages. 
First, we continue pretraining the Qwen3-4B-Base backbone with the forward-free denoising cross-entropy objective for 8B tokens. 
To stabilize refinement learning and encourage the model to preserve already correct tokens, we randomly replace a small fraction of initial mask positions with their ground-truth tokens and also include these positions in the cross-entropy loss. 
The ground-truth replacement ratio is linearly annealed from 0.3 to 0.1 over the first 4B training tokens and is kept at 0.1 for the remainder of training. 
After the 8B-token warm-up, we branch into two variants for the remaining 2B tokens. 
The self-refinement variant continues optimizing the same denoising cross-entropy objective, while the best-of-N variant is trained with the NCE ranking objective. 
For NCE training, we freeze either the full backbone or all but the last Transformer layer and train the scorer module for stability; we select the final checkpoint from these settings based on validation performance.

\end{document}